\pgfplotsset{compat=1.14}
\newcommand{\dif}[1]{\mathrm{d}{#1}}
\newcommand{\lra}{\mathcal{LRA}}
\newcommand{\la}{\mathcal{LA}}
\newcommand{\B}{\mathds{B}}
\newcommand{\R}{\mathds{R}}
\newcommand{\mathL}{\mathcal{L}}
\newcommand{\mathP}{\mathcal{P}}
\newcommand{\mi}{\text{MI}}
\newcommand{\vv}[1]{\boldsymbol{#1}}
\newcommand{\true}[0]{\texttt{true}}
\newcommand{\false}[0]{\texttt{false}}
\newcommand{\price}{\mathit{price}}
\newcommand{\squarefeet}{\mathit{sqft}}
\newcommand{\offset}{\mathit{offset}}
\newcommand{\shatter}{\mathit{shatter}}
\DeclareMathOperator{\WMI}{WMI}
\DeclareMathOperator{\MI}{MI}
\DeclareMathOperator{\PENODE}{PE\_NODE}
\DeclareMathOperator{\PEEDGE}{PE\_EDGE}
\newenvironment{shrinkeq}[1]
{\bgroup
\addtolength\abovedisplayshortskip{#1}
\addtolength\abovedisplayskip{#1}
\addtolength\belowdisplayshortskip{#1}
\addtolength\belowdisplayskip{#1}}
{\egroup\ignorespacesafterend}
\newtheorem{thm}{Theorem}[section]
\newtheorem{cor}[thm]{Corollary}
\newtheorem{mydef}[thm]{Definition}
\newtheorem{cla}[thm]{Claim} 
\newtheorem{exa}[thm]{Example} 
\newtheorem{pro}[thm]{Proposition}
\newcommand{\zhe}[1]{}
\newcommand{\guy}[1]{}
\title{
    Efficient Search-Based Weighted Model Integration
}
\author{} %
\author{ {\textbf{Zhe Zeng} \textnormal{~and~} \textbf{Guy Van den Broeck} } \\
 Computer Science Department\\
University of California, Los Angeles\\
\texttt{\{zhezeng, guyvdb\}@cs.ucla.edu} \\
}
\begin{document}

\maketitle

\begin{abstract}
  Weighted model integration (WMI) extends weighted model counting to integration in mixed discrete-continuous domains. It has shown tremendous promise for solving probabilistic inference problems in graphical models and probabilistic programs. Yet, state-of-the-art tools for WMI have limited performance and ignore the independence structure that is crucial to improving efficiency. To address this limitation, we propose an efficient model integration algorithm for theories with tree primal graphs. We exploit the sparse graph structure by using search to performing integration. Our algorithm greatly improves the computational efficiency on such problems and exploits context-specific independence between variables. Experimental results show dramatic speedups compared to existing WMI solvers on problems with tree-shaped dependencies.
\end{abstract}

\section{INTRODUCTION}

Weighted model counting (WMC) is the task of counting the weighted sum of all satisfying assignments of 
a propositional logic theory.
In recent years, WMC was shown to be an effective solution for addressing probabilistic inference in a wide spectrum of formalisms \citep{sang2005performing,chakraborty2014distribution,ermon2013embed,chavira2008probabilistic,choi2013compiling,VdBFTDB17,fierens2015inference}.

An inherent limitation of WMC is that it can only deal with discrete distributions.
In order to overcome this restriction,
weighted model integration (WMI) \citep{belle2015probabilistic} was introduced as a generalization of WMC towards hybrid domains, characterized by both discrete and continuous variables.
The formalism relies on satisfiability modulo theory (SMT) \citep{barrett2018satisfiability} technology,
which permits reasoning about the satisfiability of 
theories involving, for example, linear constraints over reals.
WMI works by summing a simple weight function over solutions to Boolean variables and integrating over solutions to the real variables of an SMT theory.
Weight functions play the role of (unnormalized) densities, whereas the logic theory captures the structure of the distribution.
WMI (or closely related formulations) has recently
been applied to several probabilistic graphical model
and programming tasks
\citep{chistikov2015approximate,Albarghouthi:2017:FPV:3152284.3133904,morettin2017efficient,belle2017weighted,braz2016probabilistic}.

Both WMI and WMC are sum-of-product problems~\citep{bacchus2009solving}. In discrete domains, such problems are amenable to a divide-and-conquer approach called search-based inference, where variables are instantiated recursively until the inference problem decomposes.
Solving WMC by search, exploiting problem-specific structure, has been shown to be highly effective, in particular on graphical models that exhibit sparsity \citep{chavira2008probabilistic}.
However, progress in WMI is far from its Boolean counterpart, and currently does not exploit independence. More generally, exact inference algorithms for hybrid graphical models do not exploit sparsity and structure as much as discrete graphical model inference algorithms.

As a first approach to leverage structure, in this paper, we propose a search-based inference procedure for exact model integration that leverages decomposition to speed up inference.
We demonstrate how local structure encoded in SMT theories gives rise to context-specific decomposition during search, reducing the number of models to be generated and integrated over.
The integration problem is decomposed into sub-problems by instantiating shared variables and recursing independently on the resulting simplified SMT theories.
We show how to choose finitely many values to instantiate continuous variables with, and subsequently do polynomial interpolation to recover exact answers to WMI problems.
Our complexity analysis proves the first tractability result for a non-trivial class of WMI problems.
Moreover, our experimental evaluation confirms that the approach is drastically faster than existing alternatives on WMI problems with sparse, tree-shaped primal graphs.

\section{BACKGROUND}

We assume that the reader is familiar with propositional logic and the SAT problem~\citep{biere2009handbook}.
Model counting (\#SAT) is the task of counting the number solutions (models) to a given SAT problem~\citep{gomes2009model}.
Weighted model counting (WMC) generalizes this task by summing weights associated with individual SAT solutions. It is widely used as tool for probabilistic reasoning~\citep{sang2005performing,chavira2008probabilistic,ermon2013embed,chakraborty2014distribution,fierens2015inference,VdBFTDB17}.

Satisfiability Modulo Theories (SMT) generalizes SAT to determining the satisfiability of a formula with respect to a decidable background theory.
In particular, we will consider quantifier-free SMT formulas in the theory of linear arithmetic over the reals, or SMT($\lra$). Here, formulas are Boolean combinations of atomic propositions (e.g., $a$, $b$), and of atomic $\lra$ formulas over real variables (e.g., $x < y + 5$). 
Variable instantiations are denoted as $b^\star$ or $x^\star$.
Sets are denoted in boldface.

\begin{exa}\label{exa: price}
    For a house $i$, let $\price_i$ be its price and $\squarefeet_i$ its square footage.
    We can build a simple SMT($\lra$) formula of the relationship between these real variables, with the corresponding solution space depicted in Figure~\ref{fig: price}. That is, SMT($\lra$) formula $\gamma_i$ is
    \begin{equation*}
    \resizebox{1.00\hsize}{!}{$
      \begin{array}{lr}
        (\price_i < 10 \cdot \squarefeet_i + 1000) \lor ~ (\price_i < 20 \cdot \squarefeet_i + 100) \\[0.1cm]
        (0 < \price_i < 3000) \land (0 < \squarefeet_i < 200).
      \end{array}$}
    \end{equation*}
\end{exa}

\begin{figure}[!t]
\hspace{0.9cm}
\begin{tikzpicture}
\begin{axis}[
    width=0.35\textwidth,
    height=0.16\textwidth,
    xlabel={Square Footage},
    ylabel={Price},
    tick label style={font=\scriptsize},
    label style={font=\footnotesize}]
\addplot+[fill, color=green!70!black, mark options={green!90!black}] coordinates
{(0,0) (0,1000) (90,1900) (145,3000) (200,3000) (200,0)} \closedcycle;
\end{axis}
\end{tikzpicture}
\vspace{-0.3cm}
\caption{Feasible region of SMT theory $\gamma_i$ from Example~\ref{exa: price} \label{fig: price}}
\end{figure}
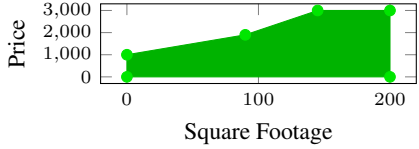

Weighted model integration~(WMI) generalizes WMC to support SMT($\lra$) formulas and real variables \citep{belle2015probabilistic}.
In its simplest form, model integration~(MI) or \#SMT \citep{chistikov2015approximate} computes the volume of the solution space. For example, the green area in Figure~\ref{fig: price} is $430{,}250$.
General WMI is defined as follows~\citep{belle2015probabilistic,morettin2017efficient}.
\begin{mydef} \label{def: wmi}
    Suppose we have 
        $n$ real variables $\vv{x}$, 
        $m$ Boolean variables $\vv{b}$, 
        an SMT($\lra$) formula $\theta(\vv{x}, \vv{b})$, ranging over $\vv{x}$ and $\vv{b}$,
        and a weight function $w(\vv{x}, \vv{b})$ that maps variable instantiations to real weights.
    Then, weighted model integration (WMI)  computes
\begin{shrinkeq}{0ex}
\begin{equation*}
  \WMI(\theta, w \mid \vv{x}, \vv{b}) = 
  \sum\nolimits_{\vv{b^\star} \in \B^m} 
  \int_{\theta(\vv{x}, \vv{b^\star})} 
  w(\vv{x}, \vv{b^\star}) 
  \, d \vv{x}.
\end{equation*}
\end{shrinkeq}
\end{mydef}

That is, the WMI is obtained by summing over every instantiation (total truth assignment) $\vv{b^\star}$ to the Boolean variables,
and integrating $w(\vv{x}, \vv{b}^*)$ over the set of solutions $\{ \vv{x}^* \mid \theta(\vv{x}^*, \vv{b^\star}) \text{ is SAT} \}$.

Weight functions $w$ are usually defined as products of literal weights \citep{belle2015probabilistic,chavira2008probabilistic}.
That is, for some set of literals $\mathL$ we are given a set of per-literal weight functions $\mathP = \{ p_{\ell}(\vv{x}) \}_{\ell \in \mathL}$. When literal $\ell$ is satisfied in a world, denoted $\vv{x} \land \vv{b} \models \ell$, that world's weight is multiplied by $p_{\ell}(\vv{x})$. Formally,
\begin{shrinkeq}{-1.5ex}
\begin{equation*}
    w(\vv{x}, \vv{b}) = \prod\nolimits_{\substack{\ell \in \mathL, \vv{x} \land \vv{b} \models \ell}} p_{\ell}(\vv{x}). 
\end{equation*}    
\end{shrinkeq}
When all variables are Boolean (i.e., $\vv{x} = \emptyset$), the per-literal weights $p_{\ell}(\vv{x})$ are constants and we retrieve the original definition of WMC as a special case of WMI \citep{chavira2008probabilistic}.
In this paper, we assume that all per-literal weights are polynomials.
This setting is expressive enough to approximate any continuous distribution \citep{belle2015probabilistic}. 

\begin{exa}\label{exa: weight}
Consider a formula $(b \lor \neg b) \land \gamma_i$ where $b$ is a Boolean variable and $\gamma_i$ is as defined in Example \ref{exa: price}.
Consider the set of literals $\mathL = \{b, (0 \! < \! \price_i \! < \! 3000)\}$ and per-literal weight functions $\mathP = \{p_{b}, p_{(0 < \price_i < 3000)} \}$, with $p_b(\vv{x}) = 1.5$ and $p_{(0 < \price_i < 3000)}(\vv{x}) = \price_i^2$.
Then, in worlds where both literals in $\mathL$ are satisfied, our weight function is
\begin{shrinkeq}{-1.5ex}
\begin{equation*}
    \resizebox{1\hsize}{!}{$
     p_{b}(\price_i, \squarefeet_i) \cdot p_{(0 < \price_i < 3000)}(\price_i, \squarefeet_i)  = 1.5 \cdot \price_i^2. $}
\end{equation*}  
\end{shrinkeq}
In worlds where $b$ is false and only $(0 < \price_i < 3000)$ is satisfied, the weight function is $\price_i^2$.
\end{exa}
Moreover, we will show that this class of weight functions is well-behaved. In particular, it allows for a natural reduction to unweighted model integration and is amenable to efficient integration.

WMI was introduced as a tool for hybrid probabilistic reasoning. Indeed, the weight of each world can be interpreted as an unnormalized density, and the WMI is its partition function subject to the logical constraints.
Under these semantics, suppose that we are interested in the probability of query $q = \left(\price_i < 2000\right)$ in house price model $\gamma_i$. That probability can be computed as the ratio of two WMI problems: $\Pr(q) = \WMI(\gamma_i \land q) / \WMI(\gamma_i) = 350{,}250/430{,}250 = 81.4\%$.

\begin{figure}[!t]
    \centering
    \includegraphics[width=0.48\textwidth]{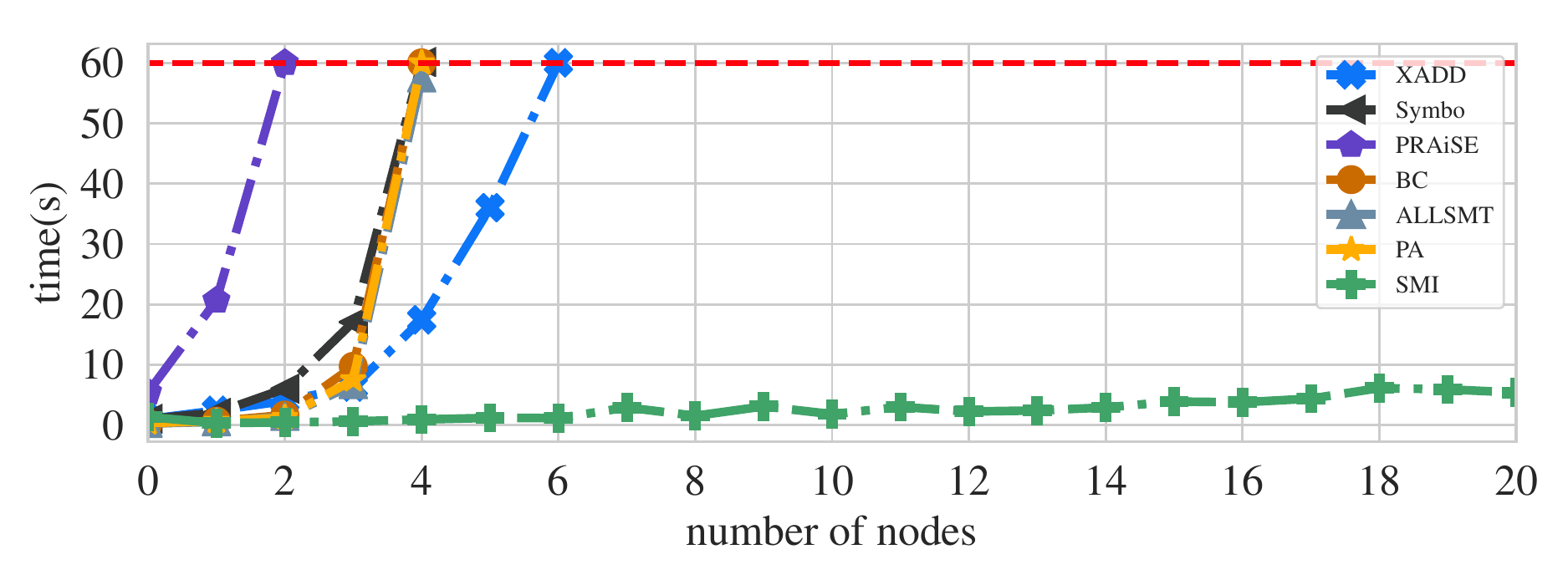}
    \caption{WMI runtime on independent model in Example~\ref{exa: indept house}.}
    \label{fig: runtime-indep}
\end{figure}
    
\paragraph{Exact WMI Solvers}

The first solver for exact WMI \citep{belle2015probabilistic} was a proof-of-concept relying on a simple block-clause strategy~(BC). It iteratively generates new models of a Boolean abstraction of the SMT formula. Each model individually is easily integrated using tools such as \textsc{Latte}~\citep{baldoni2011integrate,de2013software}.
\citet{belle2016component} proposed an all-satisfying-assignments-based solver (ALLSMT).
Unfortunately, enumerating models of the SMT abstraction is prohibitive in practice -- there are exponentially many models, and enumerating them does not exploit  structural properties of the SMT theory such as independence.
Improvements to this algorithm include predicate-abstraction solvers \citep{belle2016component, morettin2017efficient} (PA) and knowledge-compilation solvers \citep{kolb2018efficient} (XADD) and Symbo \citep{zuidberg2019exact}.
The PRAiSE solver \citep{braz2016probabilistic} performs search-based inference on literals of SMT models (not theory variables) and can also be used to solve WMI problems.
Nevertheless, WMI solvers come with no tractability guarantees and still enumerate Boolean models even when there is abundant independence structure, as we will show next.

\section{STRUCTURE IN WMI PROBLEMS}

This section shows how to reduce WMI to model integration (MI) problems whose structural independence properties can be captured by graph abstractions. 

\subsection{INDEPENDENCE}

We begin by motivating why we want to exploit independence structure during probabilistic reasoning.
\begin{exa}\label{exa: indept house}
  Consider $n$ houses, and conjoin the theory $\gamma_i$ from Example \ref{exa: price} $n$ times, once for each house, into a larger SMT theory $\gamma = \land_{i=1}^n \gamma_i$.
  The $n$ houses are independent since no formula in $\gamma$ connects properties of different houses. Thus, the WMI of $\gamma$ can be computed by multiplying the WMI of each individual theory $\gamma_i$. \\
  Figure~\ref{fig: runtime-indep} takes a trivial weight function and compares existing WMI solvers on this simple problem. None is able to exploit the extreme independence structure in $\gamma$. Our proposed method SMI, however, runs in linear time, as expected by the trivial factorization.
\end{exa}

This explosion in runtime is due to the fact that existing solvers ignore independence between variables in the SMT($\lra$) theory.
However, in discrete graphical models and WMC, leveraging independence to decompose problems is at the core of all exact inference methods, and search-based algorithms in particular~\citep{darwiche2009modeling,dechter2007and}.
Specifically, exact discrete inference methods \emph{create} independence even when it is not immediately present, by performing a case analysis on selected discrete variables, instantiating them to all values, and simplifying the model. Through this process, search-based inference algorithms induce and exploit context-specific independence~\citep{boutilier1996context}.
The decompositions afforded by (conditional and context-specific) independence vastly reduce the computational cost of inference. Example \ref{exa: indept house} illustrated that this intuition carries over to WMI problems.

In what follows, we first describe the graph abstraction of SMT theories that characterizes dependencies between variables. These form the basis of our algorithm.
Second, we show how WMI in hybrid domains can be reduced to unweighted MI in real domains. Hence, the solver we develop in this paper will target MI problems.

\subsection{GRAPH ABSTRACTION OF SMT}

Primal graphs are often used to characterize variable dependencies.
For the example Boolean CNF formula $\theta_B = (y \lor x_1) \land (y \lor x_2)$ the primal graph is shown in Figure~\ref{fig: small primal}. Its edges encode that variable pairs $(y,x_1)$ and $(y,x_2)$ appear in the same clause, while $(x_1,x_2)$ never appear together, and are thus independent given $y$.
Similarly, we will use primal graphs for SMT theories to capture variable dependency information as follows.

\begin{figure}[tb]
\begin{subfigure}[t]{0.15\textwidth}      
    \centering
    \scalebox{0.9}{
    \begin{tikzpicture}[level/.style={sibling distance=5mm/#1},
        level distance=9mm,]
    \node [circle,draw, scale=0.8] (y){$x_1$}  
      child {node [circle, draw](x){$y$} 
        child {node [circle, draw, scale=0.8](z){$x_2$} }
      };
    \end{tikzpicture}
    }
    \caption{Primal graph}
    \label{fig: small primal}
  \end{subfigure}
  ~
  \begin{subfigure}[t]{0.3\textwidth}       
  \centering
  \scalebox{0.9}{
  \begin{tikzpicture}[
    level/.style={sibling distance=20mm/#1},
    level distance=7.0mm,
    level 3/.style={sibling distance=5.3mm}
    ]
  \node [circle,draw] (y){$y$}  
    child {node [rectangle, draw](t){$1$} 
      child {node [circle, draw, scale=0.8](x1){$x_1$} 
        child {node [rectangle, draw, scale=0.8] {$1$}}
        child {node [rectangle, draw, scale=0.8] {$0$}}
      }
      child {node [circle, draw, scale=0.8](x2){$x_2$} 
        child {node [rectangle, draw, scale=0.8] {$1$}}
        child {node [rectangle, draw, scale=0.8] {$0$}}
      }
      edge from parent node[above left]{$y = \true$}
    }
    child {node [rectangle, draw](f){$0$}
      child {node [circle, draw, scale=0.8](x1){$x_1$} 
        child {node [rectangle, draw, scale=0.8] {$1$}}
      }
      child {node [circle, draw, scale=0.8](x2){$x_2$} 
        child {node [rectangle, draw, scale=0.8] {$1$}}
      }
      edge from parent node[above right]{$y = \false$}
    };    
  \end{tikzpicture}
  }
  \caption{Discrete And/Or Search Tree}
  \label{fig: discrete aos tree}
\end{subfigure}
\caption{Primal graph and search tree for $(y \lor x_1) \land (y \lor x_2)$.}
\end{figure}

\begin{mydef}{\textbf{(Primal graph of SMT)}}\label{def:smt primal}
  The primal graph of an SMT($\lra$) CNF is an undirected graph whose vertices are all variables and whose edges connect any two variables that appear in the same clause.
\end{mydef}
\begin{exa}
  \label{ex:ig}
  Consider the following 
  theory $\theta_n$. 
\begin{shrinkeq}{-1.5ex}
  \begin{equation*}
  \theta_n = 
  \left\{
  \begin{array}{lr}
  (-1 \leq y \leq 1) \land (-0.5 \leq x_1, \cdots, x_n \leq 0.5) \\
  (x_i + 1 \leq y) \lor ( y \leq x_i - 1), \text{ for all }i \in [n] \\
  \end{array}
  \right.
  \end{equation*}  
\end{shrinkeq}
  Figure \ref{fig: theta 2} shows its primal graph and solution space.
\begin{figure}[tb]
   \centering
\begin{subfigure}[t]{0.22\textwidth} 
  \scalebox{0.9}{
    \begin{tikzpicture}[level/.style={sibling distance=10mm/#1},level distance=12mm,]
    \node [circle,draw] (y){$y$}  
        child {node [circle, draw, scale=0.8](x1){$x_1$}} 
        child {node [circle, draw, scale=0.8](x2){$x_2$}}
        child [missing]
        child {node [circle, draw, scale=0.8](xn){$x_n$}
      };
    \path (x2) -- (xn) node [midway] {$\cdots$};
    \end{tikzpicture}}
  \caption{Primal graph of theory $\theta_n$}
  \label{fig: primal graph}
\end{subfigure}
~
\begin{subfigure}[t]{0.22\textwidth}
    \centering
    \includegraphics[width=0.95\textwidth]{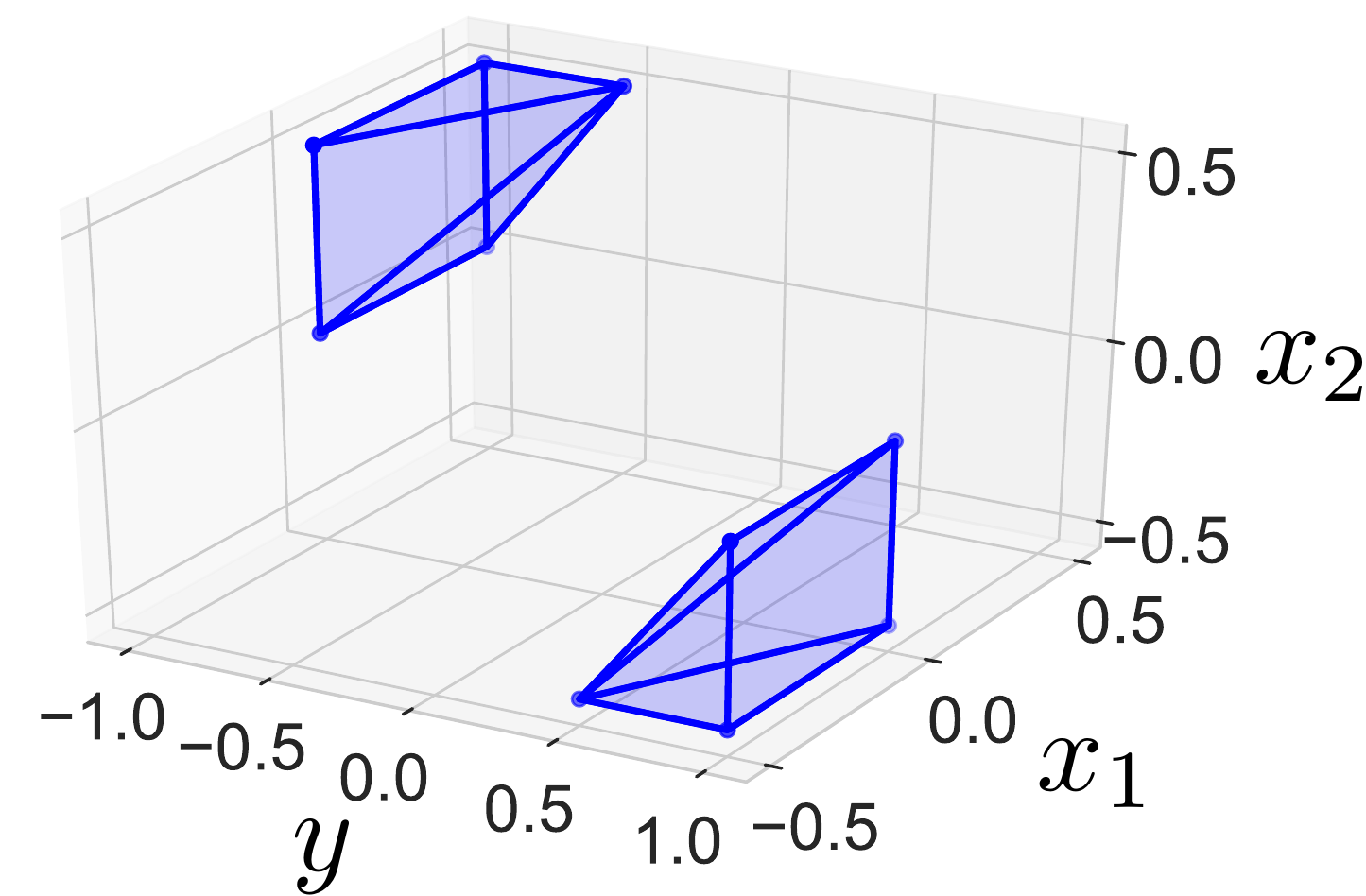}
    \caption{Feasible region for $\theta_2$ }
    \label{fig: model space}
\end{subfigure}
\caption{Primal graph and feasible region from Example \ref{ex:ig}.}
\label{fig: theta 2}
\end{figure}
\end{exa}

While there are many flavors of \emph{search-based} exact inference, including recursive conditioning~\citep{darwiche2001recursive}, DPLL model counting~\citep{sang2005performing}, knowledge compilation~\citep{chavira2008probabilistic}, and SumProd algorithms~\citep{bacchus2009solving}, we use the And/Or-search framework to illustrate the required concepts \citep{nilsson1982principles,dechter2007and}.

The And/Or search algorithm for WMC problems recursively simplifies a discrete counting problem by alternating between two steps. 
The first (OR) step selects a Boolean variable and tries to instantiate it to both true and false (we will later see how to choose the variable).
The second (AND) step finds ways of partitioning the WMC problem into independent sub-problems that can be solved separately.
Such sub-problems are introduced by instantiating variables in the OR step in a way that creates independence.
The OR step is also called the Shannon expansion. 
The AND step is also referred to as component caching~\citep{sang2005performing} or detecting decomposability~\citep{chavira2008probabilistic}.

This process is illustrated in Figure~\ref{fig: discrete aos tree} for the earlier Boolean CNF $\theta_B$. Circles denote OR-step variables whose square-node children are its instantiations. After instatiating $y$, the search tree creates independent problems for $x_1$ and $x_2$. This independence can be read off directly from the primal graph in Figure~\ref{fig: small primal}.
Search-based algorithms (with caching) are known to run efficiently on WMC problems with a tree or tree-like primal graph \citep{darwiche2009modeling,bacchus2009solving}.

\subsection{MODEL INTEGRATION IS ALL YOU NEED}
\label{sec: MI is all u need}

\guy{proposition that under certain conditions this does not add cycles to the primal graph}

\guy{Boolean random variables with weights are also easily simulated by a single new real value, and two disjoint interval literals of length proportionate to the desired weight. }

This section casts hybrid WMI problems into MI problems over only real variables. We consider the case where per-literal weight functions are monomials -- functions of the form $\beta x_1^{\alpha_1} \cdots x_n^{\alpha_n}$ over real variables $x_i$ where $\beta \in \R$ and $\alpha_i \in \mathds{N}$.
We further assume that literals in $\mathcal{L}$ also appear in the theory, and that literals and their weights range over the same real variables.

We first show that any WMI problem with Boolean variables can be reduced to a WMI problem without Booleans.
Then we show that WMI problems with per-literal weights can be reduced to an unweighted MI problem where the weight function is~$1$.

\begin{pro}\label{pro: b to nb wmi}
 For each problem $\WMI(\theta, w \mid \vv{x}, \vv{b})$ there exists an equivalent problem $\WMI(\theta^\prime, w^\prime \mid \vv{x}^\prime)$ without Boolean variables $\vv{b}$ such that
  \begin{shrinkeq}{-1ex}
  \begin{equation*}
    \WMI(\theta, w \mid \vv{x}, \vv{b}) = \WMI(\theta^\prime, w^\prime \mid \vv{x}^\prime)
  \end{equation*}
  \end{shrinkeq}
  and the primal graphs of $\theta$ and $\theta^\prime$ are isomorphic.
\end{pro}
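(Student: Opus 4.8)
The plan is to simulate each Boolean variable by a fresh real variable confined to a bounded interval that is split into two unit-length pieces, one coding $\true$ and one coding $\false$, following the intuition sketched above. Concretely, for each $b_j \in \vv{b}$ introduce a new real variable $y_j$, let $\vv{x}^\prime = \vv{x} \cup \{y_1,\dots,y_m\}$, and form $\theta^\prime$ from $\theta$ (assumed in CNF, as in Definition~\ref{def:smt primal}) by (i) conjoining the bound $0 \le y_j \le 2$ for every $j$, and (ii) replacing every occurrence of the literal $b_j$ by the $\lra$ literal $(y_j \le 1)$ and every occurrence of $\neg b_j$ by its negation $(y_j > 1)$. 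For the weights, let $\mathL^\prime$ contain all real-variable literals of $\mathL$ with unchanged weights, together with $(y_j \le 1)$ carrying weight $p_{b_j}$ and $(y_j > 1)$ carrying weight $p_{\neg b_j}$ whenever the corresponding Boolean literal occurs in $\mathL$ (a missing Boolean literal simply contributes the default weight $1$); these are still polynomials over $\vv{x}^\prime$. This defines $w^\prime$ as the usual product of per-literal weights.

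For correctness, fix a Boolean instantiation $\vv{b}^\star \in \B^m$ and let $C(\vv{b}^\star) = \prod_j I_j$ with $I_j = [0,1]$ if $b_j^\star = \true$ and $I_j = (1,2]$ otherwise. By construction, for $\vv{y} \in C(\vv{b}^\star)$ the formula $\theta^\prime(\vv{x},\vv{y})$ is equivalent to $\theta(\vv{x},\vv{b}^\star)$, and $w^\prime(\vv{x},\vv{y})$ equals $w(\vv{x},\vv{b}^\star)$ and does not depend on $\vv{y}$: the literals over $\vv{y}$ satisfied on that cell are exactly $(y_j \le 1)$ for the $j$ with $b_j^\star = \true$ and $(y_j > 1)$ for the others, whose weights were defined to match those of $b_j$ and $\neg b_j$ under $\vv{b}^\star$. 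The $2^m$ cells $C(\vv{b}^\star)$ partition the box $[0,2]^m$, so splitting the $\vv{y}$-integral over them and using that each cell has unit volume gives
\begin{equation*}
\WMI(\theta^\prime, w^\prime \mid \vv{x}^\prime) = \sum_{\vv{b}^\star \in \B^m} \int_{C(\vv{b}^\star)} \int_{\theta(\vv{x},\vv{b}^\star)} w(\vv{x},\vv{b}^\star)\, d\vv{x}\, d\vv{y} = \sum_{\vv{b}^\star \in \B^m} \int_{\theta(\vv{x},\vv{b}^\star)} w(\vv{x},\vv{b}^\star)\, d\vv{x},
\end{equation*}
which is precisely $\WMI(\theta, w \mid \vv{x}, \vv{b})$.

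For the primal graphs, take the vertex bijection that fixes every real variable of $\vv{x}$ and sends $b_j \mapsto y_j$. Replacing $b_j$ by a literal over $y_j$ inside a clause makes that clause's variable set the image of the original one under this bijection, so every original edge is preserved and no new edge is created among the old variables; the only new clauses are the unary bounds $0 \le y_j \le 2$, which mention $y_j$ alone and hence add no edge. Thus the map is a graph isomorphism. (If one also wants $\theta^\prime$ to inherit the standing assumption that weighted literals occur in the theory, observe that $(y_j \le 1)$ occurs in $\theta^\prime$ exactly when $b_j$ occurs in $\theta$.)

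The only step that needs genuine care is the middle one: verifying that the product-of-per-literal-weights form is preserved exactly, so that restricting $w^\prime$ to a cell $C(\vv{b}^\star)$ reproduces $w(\cdot,\vv{b}^\star)$ with no residual dependence on $\vv{y}$ — this is what makes each per-cell integral collapse to a unit factor and the sum telescope back to the Boolean sum. Everything else is immediate bookkeeping. An essentially equivalent presentation eliminates the Booleans one at a time by induction on $m$, which avoids reasoning about the whole product box at once but is otherwise the same argument.
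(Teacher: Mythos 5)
Your proof is correct and follows essentially the same route as the paper's: each Boolean is simulated by a fresh real variable on a bounded interval split into two unit-length pieces (the paper uses $[-1,1]$ split at $0$, one Boolean at a time; you use $[0,2]^m$ split into product cells, all at once), with the per-literal weights transferred to the two exclusive $\lra$ literals so that each unit-volume piece reproduces the corresponding Boolean term. Your explicit verification of the primal-graph isomorphism is a small bonus the paper leaves implicit, but the underlying argument is the same.
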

This reduction encodes Boolean variables using fresh real variables and replaces each Boolean atom and its negation by two exclusive $\lra$ atoms over those real variables.
Proposition~\ref{pro: b to nb wmi} allows us to focus on WMI problems without Boolean variables involved. Certain weight functions can also be reduced, as we show next.

\begin{pro}\label{pro: wmi to mi}
  For each problem $\WMI(\theta, w \mid \vv{x})$ with per-literal weights $w$ as defined in this section,
  there exists an equivalent unweighted problem $\MI(\theta^\prime \mid \vv{x}^\prime)$ s.t.\
  \begin{shrinkeq}{-1ex}
  \begin{equation*}
    \WMI(\theta, w \mid \vv{x}) = \MI(\theta^\prime \mid \vv{x}^\prime).
  \end{equation*}
  \end{shrinkeq}
  Moreover, 
  when weights $w$ are defined over univariate literals,
  theories $\theta$ and $\theta^\prime$ have identical primal graph treewidth~\citep{Robertson1986}.
  \footnote{An earlier version of this paper omitted the requirement that the per-literal weights need to be defined over univariate literal for treewidth to remain the same after reduction.}
  
\end{pro}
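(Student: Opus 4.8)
The plan is to reduce in two stages. First, apply Proposition~\ref{pro: b to nb wmi} so that we may assume $\theta$ and $w$ involve only real variables $\vv x$, with $w(\vv x) = \prod_{\ell \in \mathL,\, \vv x \models \ell} p_\ell(\vv x)$ and each per-literal weight a monomial $p_\ell(\vv x) = \beta_\ell \prod_{i} x_i^{\alpha_{\ell,i}}$. The key observation is that such a monomial is itself a volume: assuming (as is natural when $w$ is a density over nonnegative quantities) that $\beta_\ell > 0$ and that every variable occurring with a positive exponent is nonnegative on $\theta$'s feasible region, we have $\int_0^{x_i}\!\!\cdots\!\int_0^{x_i} 1\,\mathrm{d}z_1\cdots\mathrm{d}z_{\alpha_{\ell,i}} = x_i^{\alpha_{\ell,i}}$ and $\int_0^{\beta_\ell} 1\,\mathrm{d}z = \beta_\ell$. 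So the second stage introduces fresh real variables whose sole purpose is to reproduce, through unweighted integration, exactly $p_\ell(\vv x)$ in worlds where $\ell$ holds and the constant $1$ elsewhere.

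Concretely, for each literal $\ell \in \mathL$ add a block $\vv z_\ell$ of $1 + \sum_i \alpha_{\ell,i}$ fresh reals: one variable $z_\ell^0$ and, for each $i$, a group $z_\ell^{i,1},\dots,z_\ell^{i,\alpha_{\ell,i}}$. Build $\theta'$ by conjoining to $\theta$, separately for each fresh variable $z$ with its intended upper bound $u$ (so $u = \beta_\ell$ for $z_\ell^0$ and $u = x_i$ for the $i$-th group), the formula $\bigl(\ell \land 0 \le z \le u\bigr) \lor \bigl(\neg\ell \land 0 \le z \le 1\bigr)$; converting these to CNF by distribution introduces no new variables. Put $\vv x' = \vv x \cup \bigcup_\ell \vv z_\ell$. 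Since no two fresh variables ever occur in a common clause, Fubini gives
\[
  \MI(\theta' \mid \vv x') = \int_{\theta(\vv x)} \Bigl( \textstyle\prod_{\ell \in \mathL} \operatorname{vol} C_\ell(\vv x) \Bigr)\, \mathrm{d}\vv x ,
\]
where $C_\ell(\vv x)$ is the box cut out of the $\vv z_\ell$-space by $\ell$'s clauses given $\vv x$. When $\vv x \models \ell$ this box is $[0,\beta_\ell]\times\prod_i [0,x_i]^{\alpha_{\ell,i}}$ of volume $p_\ell(\vv x)$, and when $\vv x \not\models \ell$ it is the unit box of volume $1$; hence the bracketed product equals $\prod_{\ell:\,\vv x\models\ell} p_\ell(\vv x) = w(\vv x)$, so the right-hand side is $\WMI(\theta, w \mid \vv x)$.

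For the treewidth claim, observe that every formula we added mentions only the variables in $\mathrm{vars}(\ell) \cup \{z\}$ for a single fresh $z$. Because $\ell$ already appears in some clause of $\theta$, the set $\mathrm{vars}(\ell)$ is a clique in the primal graph $G$ of $\theta$; therefore in the primal graph $G'$ of $\theta'$ each fresh $z$ is a simplicial vertex whose neighborhood lies inside $\mathrm{vars}(\ell)$, and distinct fresh vertices are non-adjacent. Eliminating these simplicial vertices one at a time shows $\mathrm{tw}(G') = \max\!\bigl(\mathrm{tw}(G),\, \max_\ell |\mathrm{vars}(\ell)|\bigr)$, and since $G$ contains the clique $\mathrm{vars}(\ell)$ we already have $\mathrm{tw}(G) \ge |\mathrm{vars}(\ell)| - 1$. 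When every $p_\ell$ is univariate, $|\mathrm{vars}(\ell)| = 1$, each fresh $z$ is merely pendant to one existing variable, and $\mathrm{tw}(G') = \mathrm{tw}(G)$ (whenever $G$ has an edge, so its treewidth is at least $1$); a tree primal graph even stays a tree. For multivariate $p_\ell$ the bound can exceed $\mathrm{tw}(G)$ by one, which is precisely why the stronger conclusion is stated only in the univariate case.

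The $\MI = \WMI$ identity is a routine Fubini computation once the gadget is in place; the part I expect to take the most care is the treewidth bookkeeping — getting the simplicial-vertex elimination argument right and checking it is tight enough to force the univariate restriction — together with making explicit the nonnegativity preconditions that license reading a monomial as the volume of a box.
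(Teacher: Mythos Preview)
Your proposal is correct and follows essentially the same construction as the paper: encode each monomial per-literal weight as the volume of a box via fresh auxiliary variables $z$ constrained by $(\ell \Rightarrow 0 \le z \le u) \land (\neg\ell \Rightarrow 0 \le z \le 1)$, then apply Fubini. Your write-up is in fact more complete than the paper's appendix proof, which establishes only the $\WMI=\MI$ identity and leaves both the treewidth claim and the handling of coefficients/sign conditions to the reader; your simplicial-vertex elimination argument for treewidth and your explicit nonnegativity caveat fill those gaps.
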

This reduction encodes weights using auxiliary parameter variables. For each literal over which a weight is defined, two set of clauses will be appended such that if the literal holds, the MI over the auxiliary variables equals the monomial weight function; otherwise, it equals 1.

Crucially, both reductions can be constructed in polynomial time. Similar efficient reductions exist for arbitrary polynomial weight functions, but can slightly increase treewidth.
Detailed descriptions of these reduction processes are included in Appendix~\ref{sec: MI with poly weight}.

\begin{exa}\label{exa: weighted price}
    Consider SMT($\lra$) theory $(b \lor \neg b) \land \gamma_i$ with literal set $\mathL$ and per-literal weight functions $\mathP$ as defined in Example \ref{exa: weight}.
    There exists an equivalent MI problem $\MI(\delta \mid \vv{x} \cup \{\lambda_b, z_b, z_i^{(1)}, z_i^{(2)} \})$ with a weight function of $1$ and without Boolean variables.
    Its SMT($\lra$) theory $\delta$ is shown below. Note that its primal graph remains a tree.
  \begin{shrinkeq}{-1.5ex}
  \begin{equation*}
    \resizebox{1.0\hsize}{!}{$
    \delta = 
    \left\{
    \begin{array}{l}
        \gamma_i \land (-1 < \lambda_b < 1) \land_{j = 1, 2} ~ (0 < z_i^{(j)} < \price_i)\\ 
         \phantom{\neg(}\lambda_b > 0\phantom{)} \Rightarrow (0 < z_b < 1.5) \\
         \neg (\lambda_b > 0) \Rightarrow (0 < z_b < 1).
    \end{array}
    \right.$}
    \end{equation*}
\end{shrinkeq}
\end{exa}

\section{SEARCH-BASED MI}

The goal of our work is to take advantage of the independence structure in SMT($\lra$) theories to reduce the computational cost of model integration.
Our solution is to exploit context-specific independence by search.

One obstacle is that to introduce independence in discrete search, we instantiate a variable with all values in its domain.
Unfortunately, when the variable has a real domain (e.g., $y \in [0, 1]$), we cannot instantiate it with every value in its domain, since there are uncountably many (see Figure~\ref{fig: failed aos tree}). This basic limitation has precluded the use of search-based inference in continuous graphical models.

\begin{figure}[!tp]
  \begin{subfigure}[t]{0.2\textwidth}             
  \centering
  \raisebox{19pt}{\scalebox{0.85}{
  \begin{tikzpicture}[level/.style={sibling distance=6mm/#1, align=center},
    level distance=14mm]
  \node [circle,draw] (y){$y$}  
    child {node [rectangle, draw, minimum height=6.5mm](t){$0$}      
      child {node[scale=0.8, below=-8mm] {$\vdots$\vspace{210mm}}}
    }
    child {node [rectangle, draw](t){$\frac{1}{4}$}       
      child {node[scale=0.8, below=-8mm] {$\vdots$}}
    }
    child {node [rectangle, draw](t){$\frac{1}{2}$}          
      child {node[scale=0.8, below=-8mm] {$\vdots$}}
    }
    child [missing]
    child {node [](t){infinitely \\ many values}      
      child {node[scale=0.8, below=-8mm] {$\vdots$}}
    }
    child [missing]
    child {node [rectangle, draw, minimum height=6.5mm](t){$1$}     
      child {node[scale=0.8, below=-8mm] {$\vdots$}}
    };    
  \end{tikzpicture}
  }}
  \caption{Infinite Search Tree}
  \label{fig: failed aos tree}
\end{subfigure}
 ~
\begin{subfigure}[t]{0.25\textwidth}
\centering
\scalebox{0.9}{
\begin{tikzpicture}[
    level 1/.style={sibling distance=14mm},
    level 2/.style={sibling distance=7mm},
    level 3/.style={sibling distance=3mm},
    level distance=8mm,]
\node [circle,draw] (z){$y$}  
  child {node [rectangle,draw] (f) {$\frac{5}{8}$} 
    child {node [circle,draw, scale=0.8] (g) {$x_1$}
      child {node[scale=0.8] {$\vdots$}}
    }
    child {node [circle,draw, scale=0.8] (h) {$x_2$}
      child {node[scale=0.8] {$\vdots$}}
    }
    edge from parent node[above left]{$p(y \mid_{y = \frac{5}{8}})$}
  }
  child {node [rectangle,draw, below=2mm] (a) {$\frac{3}{4}$}
    child {node [circle,draw, scale=0.8] (b) {$x_1$}
      child {node [rectangle,draw, below=-2mm, scale=0.8] (c) {
        \small{
        $
          \int_{-\frac{1}{2}}^{\frac{3}{4} - 1} 1 \dif x_1  
        $
        }
        }}
      child [missing]
      child [missing]
      child [missing]
    }
    child {node [circle,draw, scale=0.8] (d) {$x_2$}      
      child [missing]
      child [missing]
      child [missing]
      child {node [rectangle,draw, below=-2mm, scale=0.8] (e) {
        \small{
        $
          \int_{-\frac{1}{2}}^{\frac{3}{4} - 1} 1 \dif x_2
        $
        }
        }}      
    }
  edge from parent node[below=-2.6mm] {$~~~p(y \mid_{y = \frac{3}{4}})$}
  }
  child {node [rectangle,draw] (i) {$\frac{7}{8}$}
    child {node [circle,draw, scale=0.8] (j) {$x_1$}
      child {node[scale=0.8] {$\vdots$}}
    }
    child {node [circle,draw, scale=0.8] (k) {$x_2$}
      child {node[scale=0.8] {$\vdots$}}
    }
    edge from parent node[above right]{$p(y \mid_{y = \frac{7}{8}})$}
  }
;
\end{tikzpicture}
}
\caption{Our Finite Search Tree}
\label{fig: cont aos tree}
\end{subfigure}
\caption{
Continuous search trees for $\theta_2$ from Example~\ref{ex:ig}.}
\end{figure}

We overcome this problem by observing that MI is an integration over a piecewise polynomial, which can be fully recovered from a finite number of points.
Specifically,
for real variable $y$ in theory $\theta$,
if we instantiate the variable $y$ with a value $\alpha$,
then the MI of theory $\theta \land (y = \alpha)$ is the density of $\WMI(\theta, w)$ at $y = \alpha$.
Recall that a polynomial function $p(y)$ with degree $d$ defined over an interval $I$ is uniquely defined by its values at $d+1$ distinct points in $I$, and that
a closed-form expression for $p(y)$ 
can be recovered exactly and efficiently.

Consider again the theory $\gamma_i$ from Example \ref{exa: price}. As shown in Figure~\ref{fig: price}, function $f(\alpha) = \MI(\gamma_i \land (\squarefeet_i = \alpha))$ is a piecewise polynomial with three intervals. We can recover all three polynomials from a finite number of points, and thus obtain the integration of $f(\alpha)$, that is, the model integration $\MI(\gamma_i)$.
This motivates the search-based model integration algorithm we develop next.

\subsection{VARIABLE INSTANTIATION}
\label{sec: var ini}
We first show that
when per-literal weight functions $\mathP$ are polynomials,
WMI of theory $\theta$ can be obtained 
by doing search with finite instantiations on real variables.
\begin{pro}\label{pro: piece poly}
  Let $y$ be a real variable in SMT($\lra$) theory $\theta$ with a tree primal graph.
  If per-literal weight functions $\mathP$ are polynomials,
  the WMI is an integration over a univariate piecewise polynomial $p(y)$, that is,
  \begin{shrinkeq}{-1.5ex}
  \begin{align}\label{equ: wmi px}
  \WMI(\theta, w \mid \vv{x}, \vv{b}) = \int_{I} p(y) \dif y  
\end{align}
\end{shrinkeq}
where piecewise polynomial $p(y)$ is integrated over set $I = \{ y^* \mid \exists \vv{\hat{x}^*}, \exists \vv{b}^* ~s.t.~ \theta(y^*, \vv{\hat{x}^*}, \vv{b}^*) \text{ is SAT} \}$ with $\vv{\hat{x}}$ being the remaining real variables.
\end{pro}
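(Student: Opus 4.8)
The plan is to exhibit the integrand $p(y)$ explicitly, reduce the claim to a statement about integrals of a fixed polynomial over a \emph{parametric polytope}, and verify piecewise polynomiality either by a chamber-decomposition argument or, constructively, by eliminating variables along the primal tree in the spirit of the algorithm developed below.

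First I would let $\vv{\hat x}$ be the real variables other than $y$ and define
\begin{equation*}
  p(y^*) \;=\; \sum\nolimits_{\vv b^* \in \B^m} \int_{\theta(y^*, \vv{\hat x}, \vv b^*)} w(y^*, \vv{\hat x}, \vv b^*)\, \dif{\vv{\hat x}}.
\end{equation*}
Since all per-literal weights are polynomials, $w$ is a polynomial and hence bounded on the feasible region (which, as is standard for WMI, we take to be bounded), so Fubini's theorem lets us exchange the $\vv{\hat x}$-integral with the $y$-integral; combined with the definition of $\WMI$ as a finite sum over $\vv b^*$ this yields $\WMI(\theta, w \mid \vv x, \vv b) = \int_I p(y)\,\dif y$ with $I$ exactly the set in the statement, i.e.\ the projection of the feasible region onto the $y$-axis (itself a finite union of intervals, as $\theta$ is an SMT($\lra$) formula). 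It therefore remains to prove that $p$ is a univariate piecewise polynomial on $I$.

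For the core step I would fix $y^*\in I$ and substitute $y=y^*$ in $\theta$. Each $\lra$ atom either does not mention $y$ or becomes a linear inequality over $\vv{\hat x}$ whose constant term is an affine function of $y^*$; hence for every $\vv b^*$ the region $\{\vv{\hat x}:\theta(y^*,\vv{\hat x},\vv b^*)\}$ is a finite union of polytopes whose facets translate affinely as $y^*$ ranges over $I$, i.e.\ a parametric polytope. It is a standard fact about such families -- provable by Fourier--Motzkin elimination, or by invoking the chamber / piecewise-affine-vertex decomposition of parametric polytopes -- that the combinatorial type of the region changes only at finitely many \emph{breakpoints} of $y^*$ inside $I$, and that on each of the finitely many resulting subintervals the integral of $w$ (a polynomial, also in $y^*$) over the region is a genuine polynomial in $y^*$. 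Summing the finitely many such integrals over $\vv b^*$ -- handling the union-of-polytopes by inclusion--exclusion, or by passing to a common refinement of the chambers -- shows that $p$ is piecewise polynomial on $I$, which is the claim.

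Additionally, and in the spirit of the later algorithm, I would give the constructive version, which applies when the weights are local to tree edges (as they are after the reductions of Section~\ref{sec: MI is all u need}): root the primal tree at $y$ and integrate out the real variables leaf-first. A leaf variable $x_\ell$ occurs only in clauses it shares with its unique neighbour, so the feasible set of $x_\ell$ given the other variables is a union of intervals whose endpoints are piecewise-linear in that neighbour and in $y$; integrating a piecewise polynomial in $x_\ell$ over such an interval set again yields a piecewise polynomial in the remaining variables. Thus the class ``piecewise polynomial with polyhedral pieces'' is closed under one elimination step, and iterating down to the root leaves exactly $p(y)$. Here the tree shape is what keeps every elimination local to two variables and the number of pieces controlled, though it is not needed for the piecewise-polynomial conclusion itself. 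In either route the main obstacle is the same bookkeeping: showing that only finitely many breakpoints of $y^*$ arise and that between consecutive breakpoints the answer is an honest polynomial rather than merely a smooth function; the disjunctive structure inherited from the CNF is a secondary nuisance, dealt with by a common refinement of the pieces.
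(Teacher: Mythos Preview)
Your proposal is correct and begins exactly as the paper does: define $p(y^*)$ as the sum over Boolean assignments of the integral of $w$ over the $\vv{\hat x}$-slice, then recover $\WMI = \int_I p(y)\,\dif y$ by Fubini. The paper's own proof essentially stops there and simply asserts that $p$ is piecewise polynomial because $w$ is; your parametric-polytope/chamber argument and the constructive tree-elimination alternative actually supply the justification the paper leaves implicit, so your version is strictly more detailed than the original.
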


The set $I$ is a union of disjoint supports for piecewise polynomial $p(y)$.
We refer to these intervals as ``pieces''.
To describe our MI algorithm, we first assume in this section that these intervals and their polynomial degrees are given.
Our method to explicitly find these intervals and degrees will be given in Section~\ref{sec: Piece Enum}.

Although Proposition~\ref{pro: piece poly} holds for WMI problems with polynomial per-literal weight functions in general, we use the insights from Section~\ref{sec: MI is all u need} to only focus on MI problems.
For interval set $I$ defined in Proposition \ref{pro: piece poly}, suppose we are given the interval pieces $[l, u] \in I$ and degrees $d$ of their associated polynomials. If we instantiate variable $y$ with
$d+1$ distinct values 
in each piece $[l,u]$ of degree $d$,
and solve any sub-problems recursively,
we can recover polynomial $p_{l,u}(y)$ defined on interval $[l, u]$
by performing interpolation on $d+1$ points.
Finally, MI of the full theory $\theta$ can be computed as follows.
\begin{shrinkeq}{-1.5ex}
\begin{align}\label{eq: wmi from sum}
  \MI(\theta, w \mid \vv{x}, \vv{b}) = \sum\nolimits_{[l,u] \in I} \int_l^u p_{l,u}(y) \dif y.
\end{align}
\end{shrinkeq}

\begin{figure}[tb]
\centering
\includegraphics[width=0.35\textwidth]{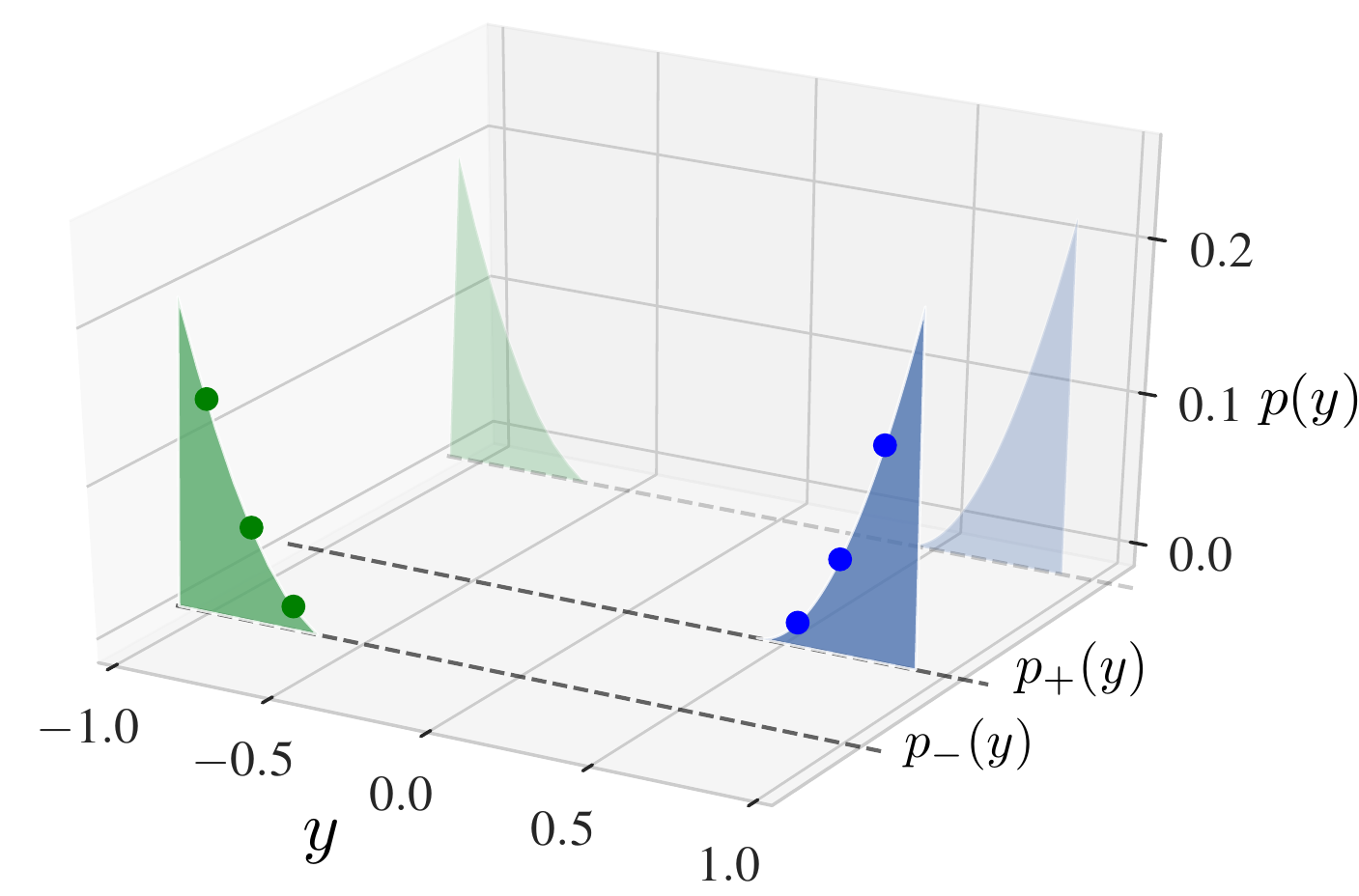}
\caption{Piecewise polynomial $p(y)$ as defined in Proposition~\ref{pro: piece poly} for theory $\theta_2$ from Example~\ref{ex:ig}, whose integration is $\MI(\theta_2)$. The two polynomials $p_{-}(y)$ and $p_{+}(y)$ are unknown, but we can recover them from a finite number of points.}
\label{fig: p(y) curve}
\end{figure}

For example, consider theory $\theta_2$ from Example \ref{ex:ig}. 
We can interpret $\MI(\theta_2)$ as an integration over piecewise polynomial $p(y)$ whose intervals $[-1, -0.5]$  and $[0.5, 1]$ both have associated degree two. After instantiating $y$ to three values in each interval, we get two independent sub-MI problems that contain variable $x_1$ and variable $x_2$ respectively. By solving these sub-problems, we obtain three points fitted by each polynomial $p_{-}(y)$ and $p_{+}(y)$ as shown in Figure~\ref{fig: p(y) curve}. Therefore, we can recover both by polynomial interpolation and can obtain $\MI(\theta_2)$ by Equation~\ref{eq: wmi from sum}. 
Figure \ref{fig: cont aos tree} depicts the search space of our algorithm on interval $[0.5, 1]$.

\begin{algorithm}[tp]
  \caption{\textbf{$\mathit{SMI}$}: Search-Based Model Integration}
  \label{alg:wmi}
  \textbf{Input:} $T$: pseudo tree, $\theta$: SMT($\lra$) theory \\
  \textbf{Output:} $p$: MI of theory $\theta$ 
  \begin{algorithmic}[1]
  \If{$T$ is a forest of trees $T^\prime$}
  \State $\theta^\prime \leftarrow$ sub-theories containing variables in $T^\prime$\\
  \Return $\prod_{T^\prime} \mathit{SMI}(T^\prime, \theta^\prime)$
  \EndIf  
  \State $p = 0, y = root(T)$, $ST_y = $ set of subtrees below $y$
  \State $I$ = $\PENODE(\theta, y)$ 
  \ForAll{polynomial piece $\{[l,u], d\} \in I$}  
  \State select $d+1$ distinct values $\alpha_i$'s in $[l, u]$
  \State $p_i \leftarrow  \mathit{SMI}(ST_r, \theta\mid_{(y = \alpha_i)})$
  \State $p_{l,u}(y) \leftarrow$ polynomial interpolation on $(\alpha_i, p_i)$'s  
  \State $p \leftarrow p + \int_l^u p_{l,u}(y) \dif y$
  \EndFor
  \State \textbf{return} $p$
  \end{algorithmic}
\end{algorithm}

The above discussion has shown that for MI problems,
we can instantiate a real variable to finitely many values, decompose the problem into independent parts,
and then solve the sub-problems recursively.
Algorithm~\ref{alg:wmi} follows exactly this strategy for search-based model integration.
The role of pseudo trees will be explained in Section~\ref{sec: complexity}.
Details on caching to speed up the algorithm are included in Appendix~\ref{s:caching}. 
The remaining problem is how to exactly obtain pieces $[l, u]$ and their associated degrees $d$ in function $\PENODE$.
We address this problem next.

\subsection{FINDING PIECES VIA CRITICAL POINTS}
\label{sec: Piece Enum}

Recall that by Proposition~\ref{pro: piece poly}, WMI of SMT($\lra$) theory $\theta$ can be rewritten as $\WMI(\theta, w \mid \vv{x}, \vv{b}) = \int_{I} p(y) \dif y$
where $p(y)$ is a piecewise polynomial,
set $I$ is a union of disjoint support of polynomials in $p(y)$,
and each piece $[l, u] \in I$ is associated with a polynomial degree $d$.
We hope that when a real variable $y$ in theory $\theta$ is chosen to be instantiated,
we can exactly find all pieces and their associated degrees for piecewise polynomial $p(y)$.

It turns out that this can be achieved. While integrating over satisfying assignments with respect to a certain variable given an SMT($\lra$) theory, integration upper bounds and lower bonds are defined by its literals. Changes in integration bounds give rise to different pieces of integration and therefore result in the piecewise nature of the polynomial in Proposition~\ref{pro: piece poly}. In our method we determine these pieces by collecting points where certain bounds meet. Further, by propagating polynomial piece and degree information in a bottom-up manner along the primal graph, we can obtain the pieces and degree for the chosen piecewise polynomial.

We will first describe our method in a basic case where there are only two real variables in the theory. Then we extend this approach to theories with tree primal graphs.

\subsubsection{Base Case: Pieces of Two Real Variables}

First we investigate a simple case where there are only two real variables $x$ and $y$ in SMT($\lra$) theory $\theta$.
Recall that we are solving an unweighted MI problem.
We would like to find pieces and associated degrees for variable $y$ such that we can instantiate $y$ as in Section~\ref{sec: var ini}:
\begin{shrinkeq}{-1ex}
\begin{align*}
  p(y) 
  &= \int_{\theta(x, y)} 1 \, \dif x 
  = \sum_{[l(y), u(y)] \in I(y)}
  \int_{l(y)}^{u(y)} 1 \, \dif x \\
  &= \sum_{[l(y), u(y)] \in I(y)}
  u(y) - l(y)
\end{align*}
\end{shrinkeq}
where set $I(y)$ is defined as 
\begin{shrinkeq}{-1.5ex}
\begin{equation}
\label{def: int bdd set}
  \{ [l(y), u(y)] \mid \forall x \in [l(y), u(y)],\theta(x, y) \text{ is SAT} \}.  
\end{equation}
\end{shrinkeq}
That is, for any fixed value $y^*$, the set $I(y^*)$ consists of intervals of consistent values for variable $x$.
For any $[l(y), u(y)] \in I(y)$, it gives a pair of integration bounds for variable $x$. Further by integrating over $x$ we can obtain a polynomial with respect to variable $y$.

Each piece $[l, u]$ corresponds to a certain class of values that gives the same symbolic integration bounds to variable $x$.
The two values $y = l$ and $y = u$ are endpoints of the piece only if integration bound set $I(y)$ changes at these points,
since the piecewise polynomial $p(y)$ is defined by these bounds.
That is, for arbitrarily small $\epsilon$,
we have $I(l-\epsilon) \neq I(l+\epsilon)$,
and it also holds at point $y = u$.
We formally define critical points below.
\begin{mydef}{\textbf{(Critical Point)}}     
  Let $\theta$ be an SMT($\lra$) theory with two real variables,
  and denote one of the real variables by $y$.
  Let $I(y)$ be an integration bound set as defined in Equation~\ref{def: int bdd set}.
  Then $y = \alpha$ is a critical point
  if for arbitrarily small $\epsilon$,
  it holds that $I(\alpha-\epsilon) \neq I(\alpha+\epsilon)$.  
\end{mydef}
\paragraph{Remark.}
The comparison of set $I(y)$ is done symbolically.
That is, for two distinct values $\alpha, \beta$, we say $I(\alpha) = I(\beta)$ if they have the same set of symbolic integration bounds.
For example, if at $y = \alpha$, $I(y) = \{ [1, y] \}$
and at $y = \beta \neq \alpha$, $I(x) = \{ [1, y] \}$, 
it holds that $I(\alpha) = I(\beta)$.
However, if at $y = \alpha$, $I(y) = \{ [1, y] \}$
and at $y = \beta$, $I(y) = \{ [y, 2] \}$, 
then we say $I(\alpha) \neq I(\beta)$. 

Our idea is that,
if we can find all critical points $y = \alpha$ where the set $I(y)$ changes,
then we can partition real domains of $y$ into disjoint intervals,
such that any support of piecewise polynomial $p(y)$ is either one of these intervals or a union of some intervals.
For the resulting interval $[l, u]$,
we can apply an SMT($\lra$) solver to $\theta^\prime = \theta \land (l < y < u)$ to check whether it is a satisfiable piece of function $p(y)$;
if this is true,
we can obtain the polynomial degree of $p_{l,u}(y)$ defined over this piece by simply traversing theory $\theta^\prime$.
We summarize this procedure as $\PEEDGE$ in Algorithm~\ref{alg: two pe algorithms} in Appendix~\ref{sec: appendix pe}.

\subsubsection{General Case: Pieces of Tree Structures}

Given an SMT($\lra$) theory $\theta$ with a tree-shape primal graph $G$, 
our goal is to enumerate pieces and their associated degrees for the root variable $y$, building on the algorithm we developed in the base case above.
This can be done in a bottom-up manner with tree primal graphs.

Specifically, we first partition theory $\theta$ into sub-theories $\theta_{r, c}$ and $\theta_{G_c}$ for each $c$,
such that $\theta = \bigwedge_{c} (\theta_{r, c} \land \theta_{G_c})$, where variables $c$ are the child variables of root $r$, and graph $G_c$ is the sub-tree rooted at variable $c$.
Each theory $\theta_{r,c}$ contains only variables $r$ and $c$,
on which we can apply the enumeration for the base case above,
and each theory $\theta_{G_c}$ contains only variables in sub-tree $G_c$.
This is possible provided that the primal graph of theory $\theta$ has a tree structure, which is why our algorithm is restricted to SMT($\lra$) theories with tree-shaped primal graphs.

For each child variable $c$,
we first obtain its pieces with respect to theory $\theta_{G_c}$ in a recursive way.
Then we can apply our enumeration algorithm for two-variable theory $\PEEDGE$ to theory $\theta_{r, c}$ with the given pieces of variable $c$.
What we would get are sets of pieces for each child variable $c$.
To be consistent with theory $\theta$,
we need to take intersections of these sets which we refer to as the shattering operation.
Finally, the resulting intersections are pieces and polynomial degrees for root variable $r$.
We provide more details of this procedure called $\PENODE$ in Algorithm~\ref{alg: two pe algorithms} in Appendix~\ref{sec: appendix pe}.

As described above, our piece enumeration algorithm is applicable to MI problems for theories with tree primal graphs.
Moreover, it is also applicable to WMI problems whose SMT theory has a tree primal graph and whose per-literal weights are monomials over univariate literals as described in Section~\ref{sec: MI is all u need}, since our reduction process can preserve the tree structure of the primal graph. 

\subsection{COMPLEXITY ANALYSIS}
\label{sec: complexity}

Inference over networks involving real variables raises considerable challenges for inference, and network structures that are tractable in the discrete case, such as polytrees, give rise to NP-hard inference problems in the hybrid case \citep{koller2009probabilistic}. We show that the complexity of our algorithm is mainly exponentially bounded by the tree height of the primal graph.

Our search algorithm for MI needs to choose which variables to instantiate first. This choice can be based on a tree data structure that orders the variables. Such a tree characterizes the computational complexity as it does for discrete And/Or search algorithms.
We first formally defined the tree that helps guide our search.
\begin{mydef}{\textbf{(Pseudo Tree)}}\label{def:pseudo tree}
  Given an undirected graph $G$ with vertices and edges $(V, E_G)$,
  a pseudo tree for $G$ is a directed rooted tree $T$ with vertices and edges $(V, E_T)$, such that any edge~$e$ that is in $G$ but not in $T$ must connect a vertex in~$T$ to one of its ancestors.
\end{mydef}
That is, edge $e = (v_1, v_2)$ such that  $e \in E_G$ and $e \notin E_T $ implies that either vertex $v_1$ is an ancestor of vertex $v_2$ in $T$ or vertex $v_2$ is an ancestor of vertex $v_1$ in $T$.
Note that the pseudo tree has the same set of vertices as $G$.
Such a pseudo tree guides SMI (Algorithm~\ref{alg:wmi}) in deciding which variable to instantiate, and when to decompose.

Next, we analyze the complexity of SMI.
Since our algorithm performs search,
its time and space complexity is characterized by the size of its search space.
Our analysis does not take caching improvements into consideration.

\begin{figure*}[!tp]
  \begin{subfigure}{0.33\textwidth}
  \centering
    \includegraphics[width=1\linewidth]{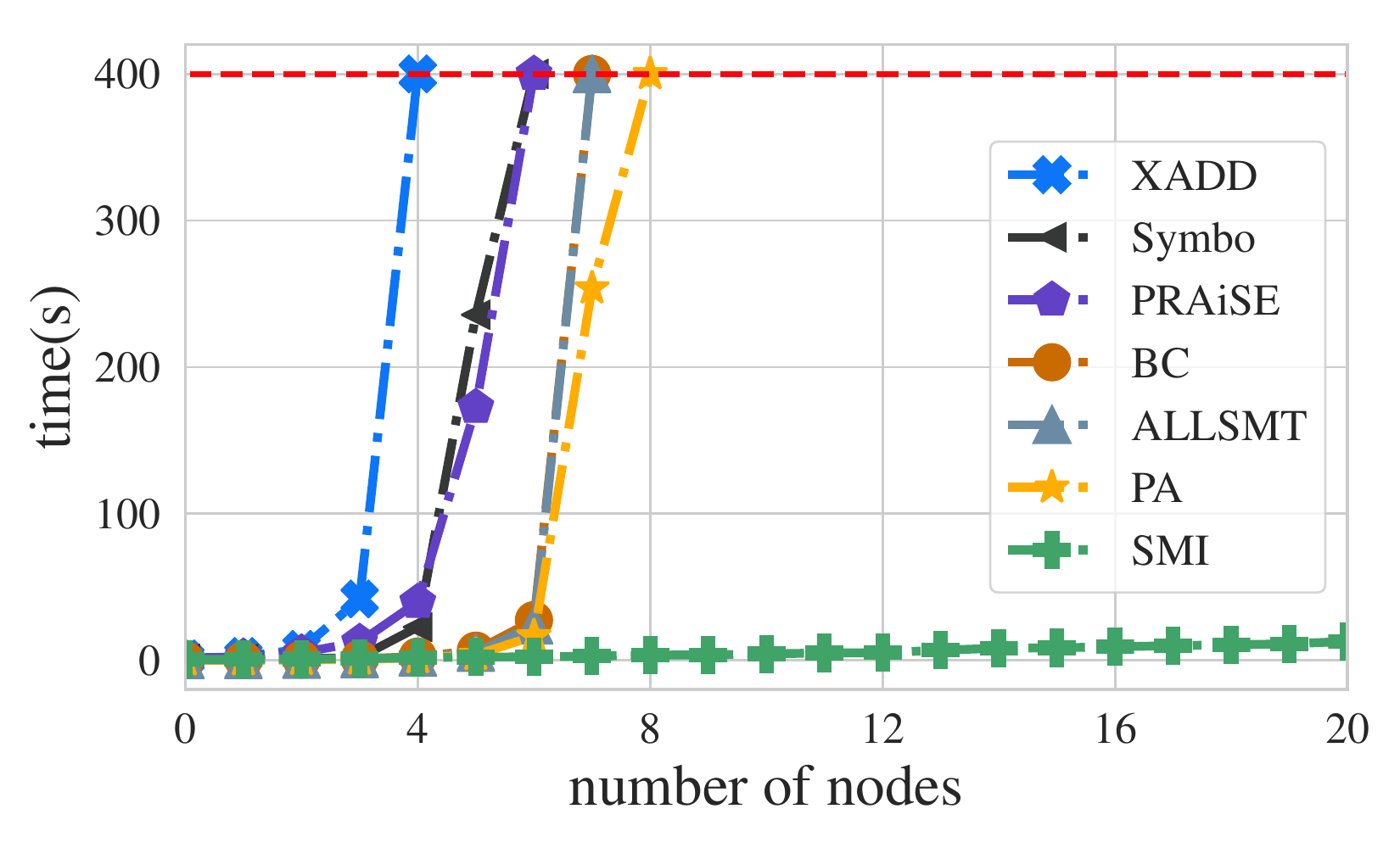}
    \caption{MI runtime on star primal graphs.}
  \end{subfigure}
  ~
  \begin{subfigure}{0.33\textwidth}
  \centering
    \includegraphics[width=1\linewidth]{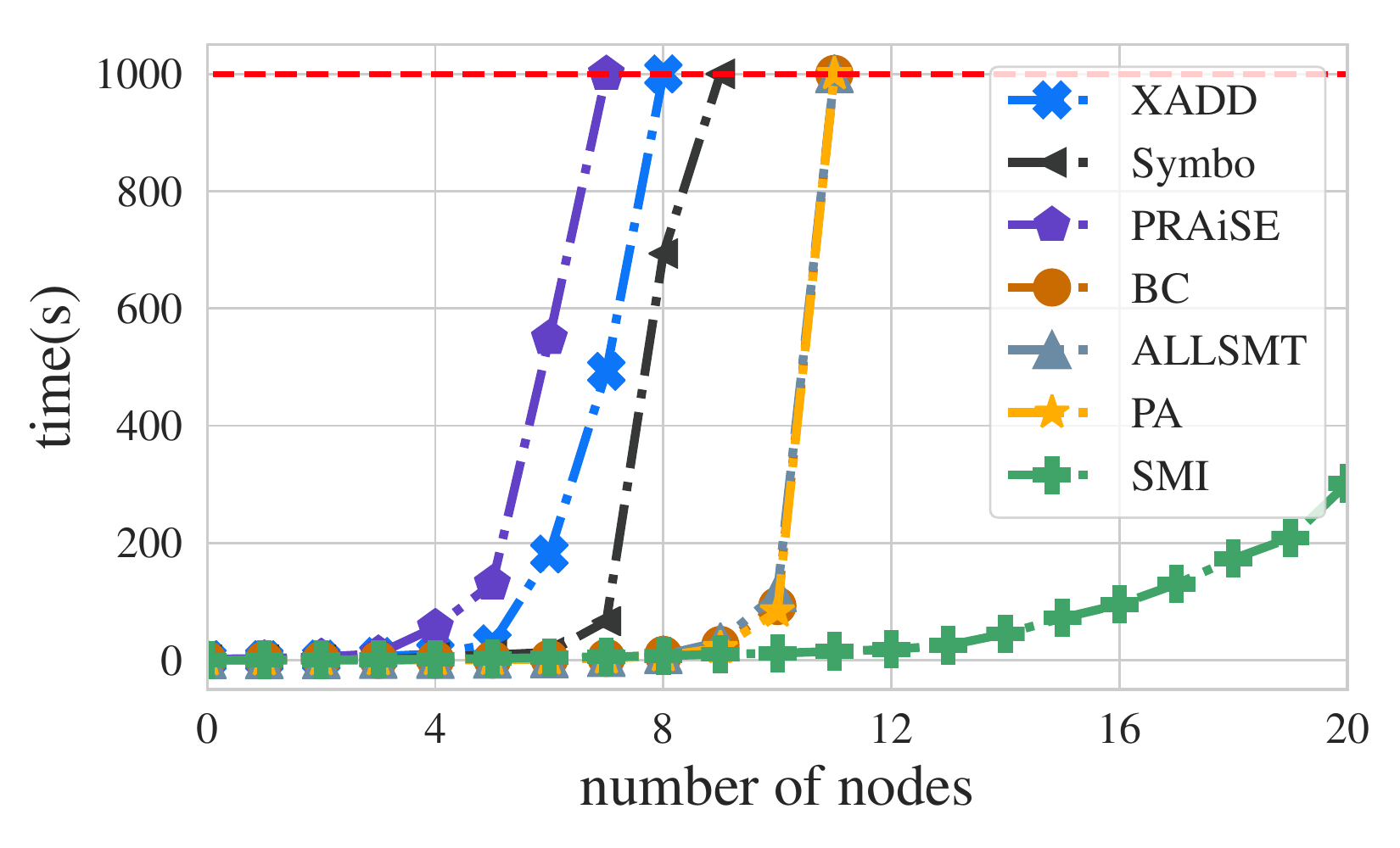}
    \caption{MI runtime on full 3-ary tree graphs.}
  \end{subfigure}
  ~
  \begin{subfigure}{0.33\textwidth}
  \centering
    \includegraphics[width=1\linewidth]{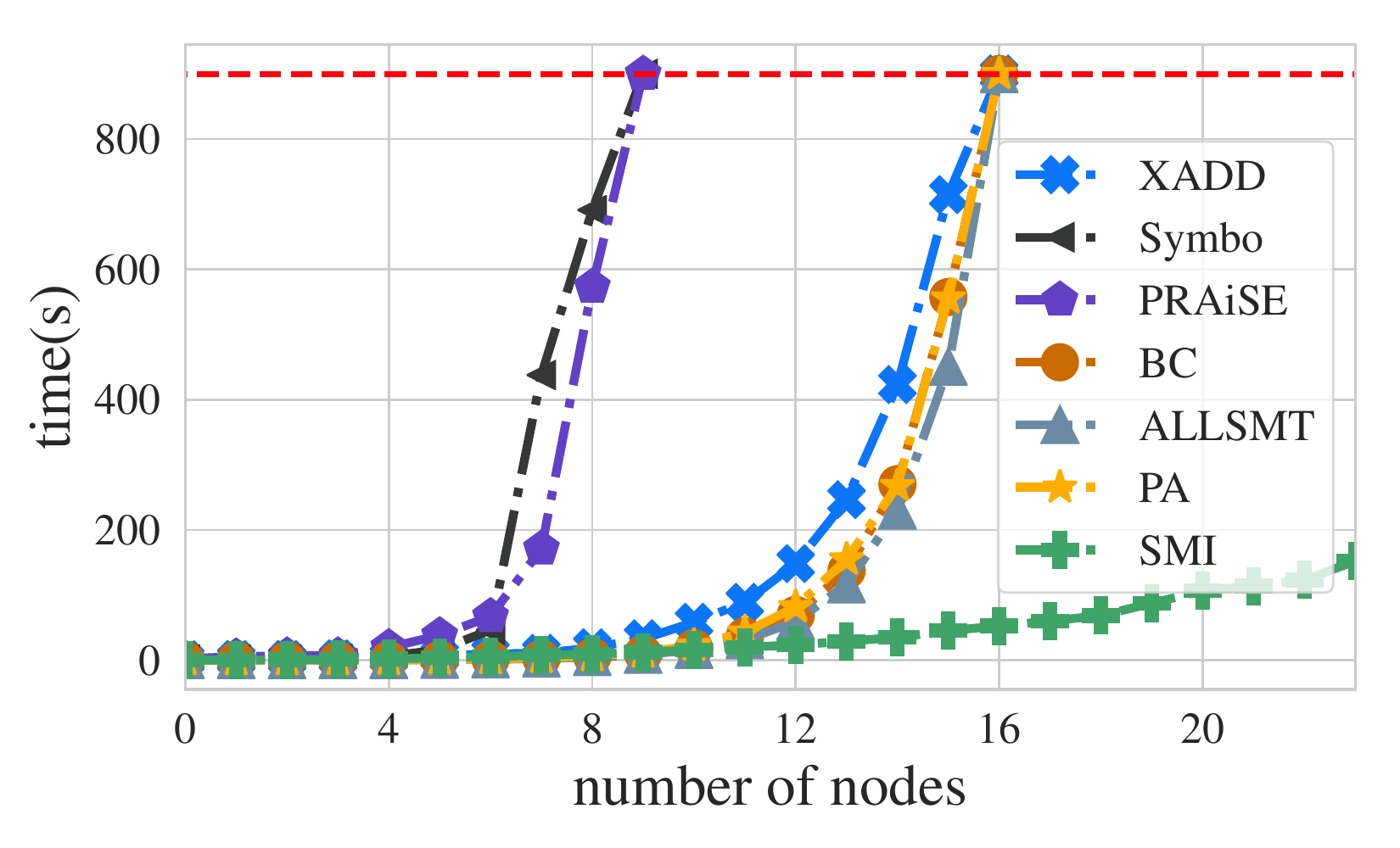}
    \caption{MI runtime on path primal graphs.}
  \end{subfigure}
\\
    \begin{subfigure}{0.33\textwidth}
        \centering
        \includegraphics[width=0.5\linewidth]{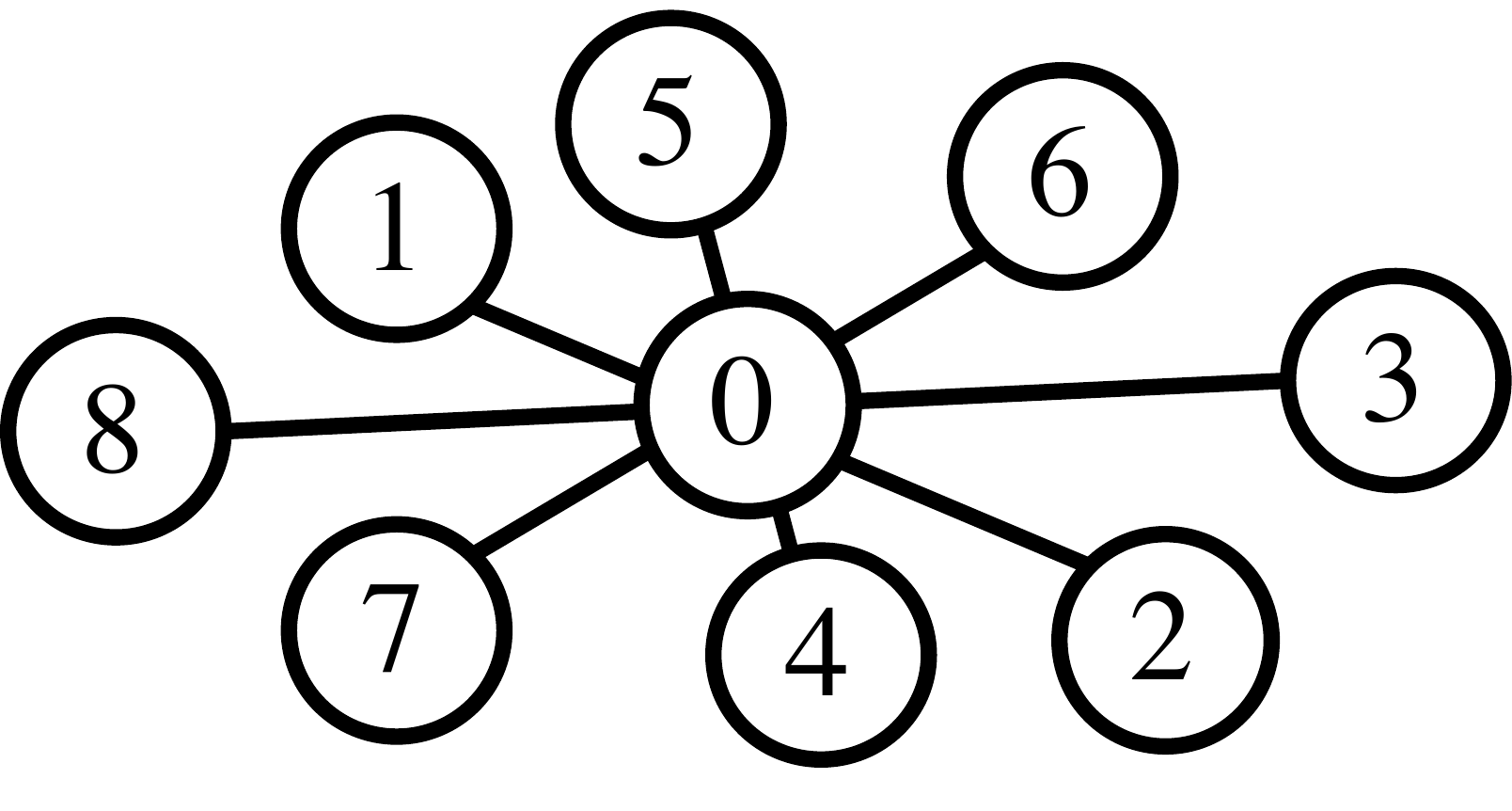}
        \caption{Star primal graph with nodes $n = 8$.}
    \end{subfigure}
  ~
    \begin{subfigure}{0.32\textwidth}
        \centering
        \includegraphics[width=0.5\linewidth]{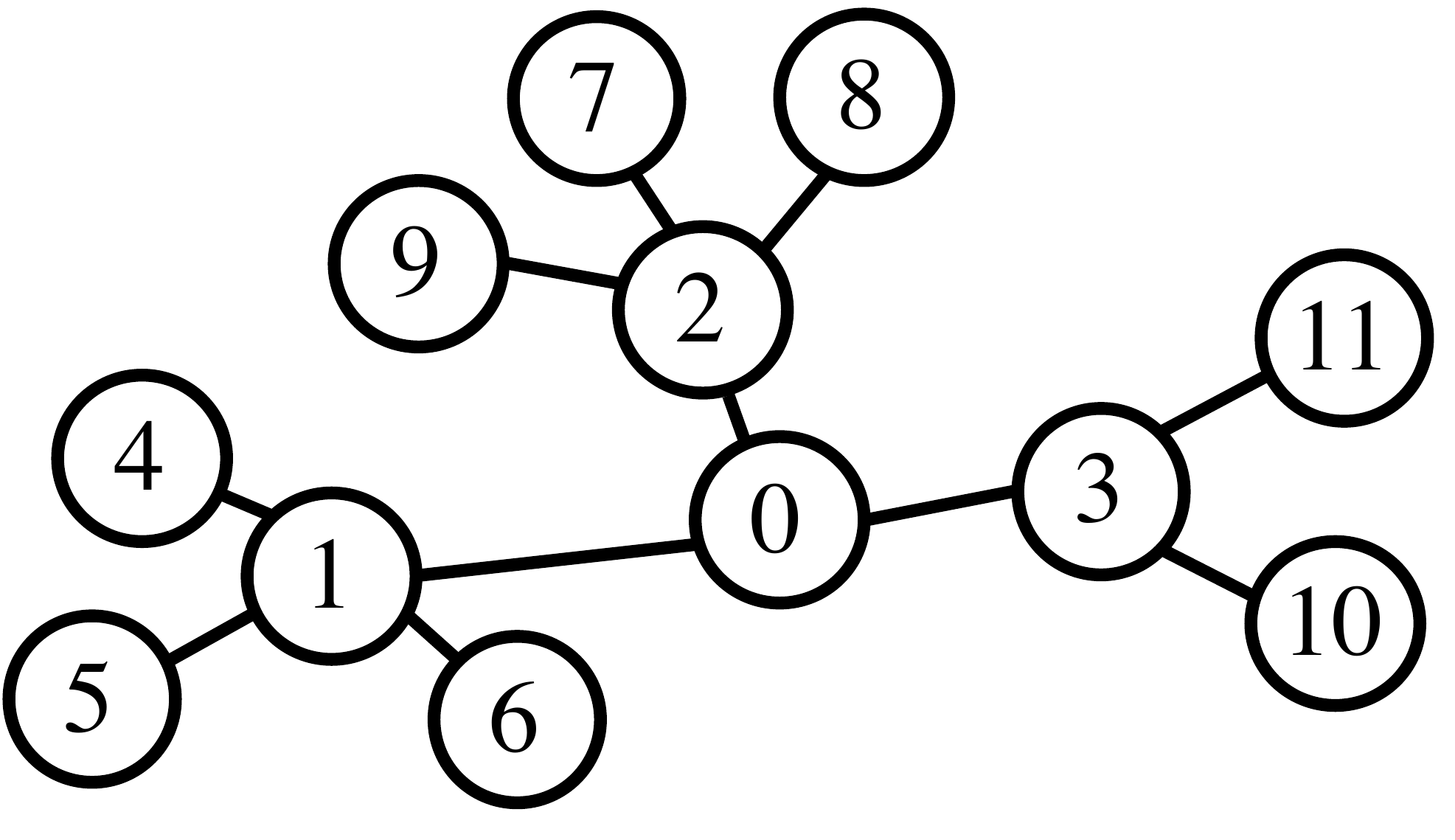}
        \caption{Full 3-ary tree with nodes $n = 12$.}
    \end{subfigure}
  ~
    \begin{subfigure}{0.33\textwidth}
        ~\\~\\
        \centering
        \includegraphics[width=0.45\linewidth]{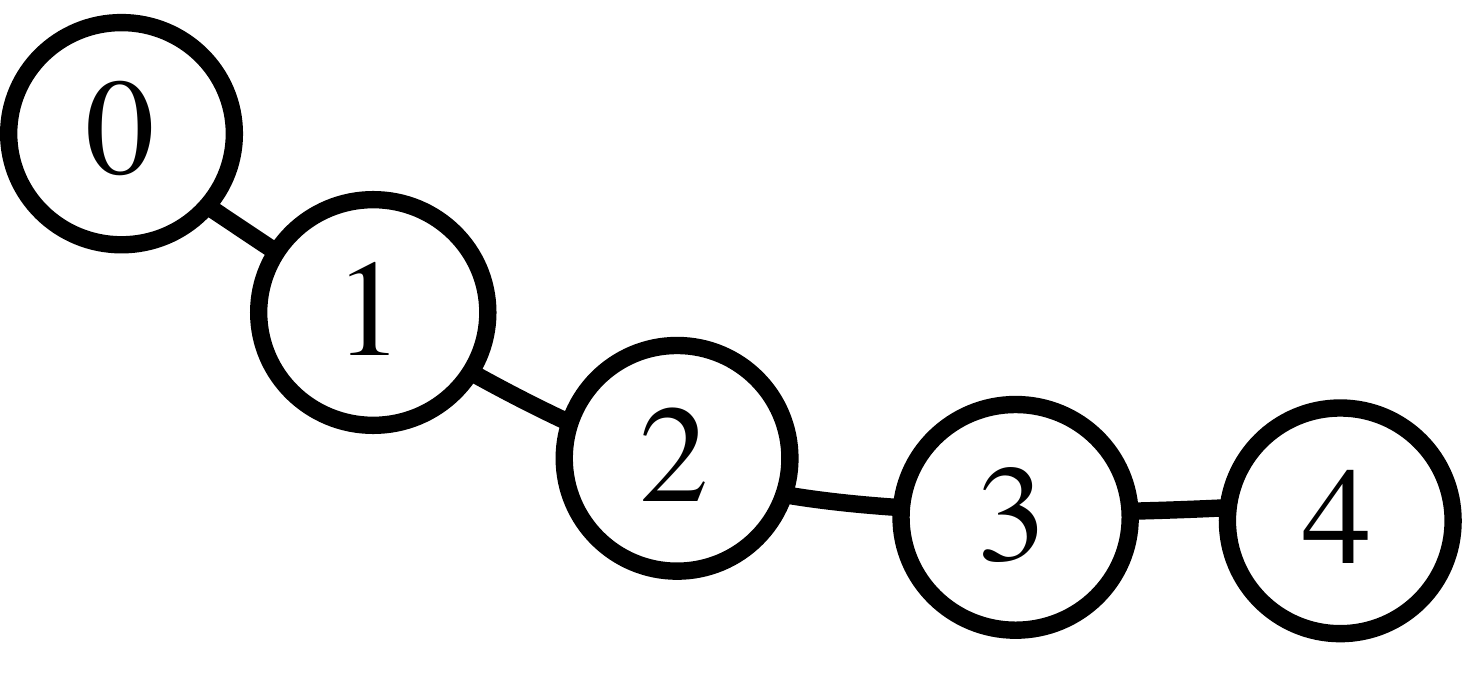}
        \caption{Path primal graph with nodes $n = 5$.}
    \end{subfigure}
    \caption{(a)-(c) MI execution time on SMT($\lra$) with tree primal graphs. (d)-(f) Example tree primal graphs.}
    \label{fig: mi experiment }
\end{figure*}

\begin{thm}{\textbf{(Size of Search Space)}}\label{thm: complexity}
  Consider an SMT($\lra$) theory $\theta$ with a tree-shaped primal graph with height $h_p$,
  and a pseudo tree $T$ with $l$ leaves and height $h_t$.
  Let $m$ be the number of $\lra$ literals in~$\theta$,
  and~$n$ be the number of real variables.   
  Then the size of the SMI search space is   
  $O(l\cdot (n^3 \cdot m^{h_p})^{h_t})$.
\end{thm}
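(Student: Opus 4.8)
The plan is to bound the size of the SMI search space by analyzing the recursion tree of Algorithm~\ref{alg:wmi}, tracking two quantities at each level: the branching factor (how many recursive calls a node spawns) and the depth (how many levels of recursion occur). The And/Or structure means the pseudo tree $T$ governs the shape of the recursion: decomposition (the forest case, lines 1--4) follows the branching of $T$, while variable instantiation (lines 5--13) follows the root-to-leaf paths of $T$. So first I would argue that the number of distinct ``instantiation chains'' in the search is $O(l)$, one per leaf of $T$, and that each chain has length at most $h_t$. This accounts for the outer $l$ and for the exponent $h_t$.

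Second, I would bound the work done at a single OR node, i.e., the branching introduced when we instantiate a single real variable $y = \mathit{root}(T)$. This is where the $n^3 \cdot m^{h_p}$ factor comes from. The number of children of an OR node is $\sum_{[l,u]\in I}(d+1)$, so I need to bound both $|I|$ (the number of pieces) and the maximum degree $d$. The degree $d$ is bounded using Proposition~\ref{pro: piece poly}: each integration step along the primal tree can raise the polynomial degree by a constant, and there are at most $n$ variables to integrate out, so $d = O(n)$ — contributing the $n$ factor (the $n^3$ will absorb a polynomial-in-$n$ overhead, e.g., from interpolation cost and from the number of pieces scaling polynomially in $n$ at each edge). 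The number of pieces $|I|$ is the crux: by the critical-point analysis of Section~\ref{sec: Piece Enum}, in the two-variable base case the pieces are delimited by points where integration bounds (determined by $\lra$ literals) cross, giving $O(m^2)$ candidate critical points from $m$ literals; then the $\PENODE$ procedure obtains pieces for the root by shattering (intersecting) the piece sets propagated up from each child. Each shattering step along an edge of the primal graph can multiply the number of pieces, and a root-to-leaf path in the primal graph has length $h_p$, so $|I| = O(m^{h_p})$. Combining, the branching at one OR node is $O(n^3 \cdot m^{h_p})$.

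Third, I would combine the pieces: along one instantiation chain of length $\le h_t$, each of the $\le h_t$ OR nodes contributes a multiplicative branching of $O(n^3 \cdot m^{h_p})$, giving $(n^3 \cdot m^{h_p})^{h_t}$ nodes per chain; summing over the $\le l$ chains (one per leaf of $T$) yields the claimed $O(l \cdot (n^3 \cdot m^{h_p})^{h_t})$. I would note that space complexity matches time here since the search tree is explored depth-first and its total size is what is being counted (caching, which would improve this, is explicitly excluded).

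The main obstacle is the bound $|I| = O(m^{h_p})$ on the number of pieces at the root: I need to show carefully that the shattering operation in $\PENODE$, when composed along a height-$h_p$ primal tree, multiplies the piece count by only a factor polynomial in $m$ per level and does not blow up super-polynomially at a single level — i.e., that intersecting the piece sets coming from a node's children, each of size $O(m^{h_p'})$ for subtree heights $h_p' < h_p$, together with the $O(m^2)$ new critical points from the connecting edges, yields $O(m^{h_p})$ rather than something like $m^{h_p \cdot (\text{degree})}$. This requires tracking that critical points are \emph{added} across sibling subtrees (their union is taken, contributing additively to a single level) while the multiplicative blowup happens only \emph{across levels}, and that a single edge's $\PEEDGE$ call maps a child piece set of size $s$ to a parent piece set of size $O(s \cdot m)$ — so that $h_p$ levels give $O(m^{h_p})$. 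Getting this bookkeeping right, and confirming the constant-per-level degree increase that underlies $d = O(n)$, is the delicate part; the rest is routine accounting over the pseudo tree.
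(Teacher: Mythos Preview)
Your proposal is correct and follows essentially the same structure as the paper's proof: cover the pseudo tree by $l$ root-to-leaf paths of length at most $h_t$, bound the OR-node branching by (number of pieces) $\times$ (degree $+1$), and obtain the piece bound via the per-edge multiplicative argument you describe (formalized in the paper as a separate claim proving $O(n \cdot m^{h_p})$ pieces by induction on primal-graph height, with exactly your observation that only the $O(m)$ bounds containing the parent variable contribute multiplicatively while the rest are numerical). The only difference is bookkeeping on the $n^3$ factor---the paper attributes one $n$ to the piece count (via shattering across children) and uses a degree bound of $n(d+h_p) \le O(n^2)$, whereas you allocate these factors slightly differently---but the argument is otherwise identical.
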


Hence,
we can conclude that the complexity of our algorithm is bounded exponentially by tree heights of both the primal graph and pseudo tree.
In fact, for any tree-shaped primal graph,
we can always choose a pseudo tree whose height $h_t$ is $O(\log n)$ to guide the search \citep{dechter2007and}.
Moreover, the number of leaves $l$ in pseudo tree $T$ is no larger than the number of nodes $n$.
Thus, we have the following corollary.

\begin{cor} \label{cor: complexity}
  Following the notation in Theorem~\ref{thm: complexity},
  with properly chosen pseudo tree $T$ whose tree height $h_t$ is $O(\log n)$,
  the size of the search space generated by SMI is 
  $O\!\left(n^{1 + 3 \log n + h_p \log m}\right)$. 
\end{cor}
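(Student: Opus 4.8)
The plan is to substitute the two claimed bounds on $h_t$ and $l$ directly into the search-space bound $O(l\cdot (n^3 \cdot m^{h_p})^{h_t})$ from Theorem~\ref{thm: complexity}, and then simplify the resulting expression. First I would invoke the standard fact, cited from \citet{dechter2007and}, that any graph admitting a tree (here a tree-shaped primal graph) has a pseudo tree of height $h_t = O(\log n)$; the point is that a balanced pseudo tree can always be extracted so that its height is logarithmic in the number of vertices $n$. Second, I would use the trivial observation that the number of leaves $l$ of the pseudo tree is at most the total number of vertices, so $l \le n$. These are the only two ingredients beyond Theorem~\ref{thm: complexity}.

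With these in hand the computation is purely algebraic: starting from $O(l\cdot (n^3 \cdot m^{h_p})^{h_t})$, replace $l$ by $n$ and $h_t$ by $\log n$ to get $O\!\left(n \cdot (n^3 \cdot m^{h_p})^{\log n}\right)$. Then distribute the exponent $\log n$ over the product inside the parentheses: $(n^3)^{\log n} = n^{3\log n}$ and $(m^{h_p})^{\log n} = m^{h_p \log n}$. For the last factor I would rewrite $m^{h_p \log n} = (m^{\log n})^{h_p} = (n^{\log m})^{h_p} = n^{h_p \log m}$, using the identity $a^{\log b} = b^{\log a}$ so that everything is expressed as a power of $n$. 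Collecting the three powers of $n$ — namely $n^1$, $n^{3\log n}$, and $n^{h_p \log m}$ — and adding exponents yields $O\!\left(n^{1 + 3\log n + h_p \log m}\right)$, which is exactly the claimed bound.

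I do not anticipate a genuine obstacle here, since the corollary is essentially a re-parametrization of Theorem~\ref{thm: complexity}. The only point requiring a modicum of care is the justification that a logarithmic-height pseudo tree exists for \emph{any} tree-shaped primal graph; I would make sure to state that this follows because a pseudo tree need not respect the primal tree's own shape — one is free to choose a balanced elimination order, and the standard divide-and-conquer argument (repeatedly splitting a tree at a centroid-like vertex whose removal leaves components of at most half the size) gives depth $O(\log n)$. Everything else is bookkeeping with logarithms, so the write-up would be short: cite the two facts, substitute, and simplify.
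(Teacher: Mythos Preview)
Your proposal is correct and matches the paper's own reasoning essentially verbatim: the paper justifies the corollary in the text immediately preceding it by citing \citet{dechter2007and} for the existence of a pseudo tree with $h_t = O(\log n)$, noting $l \le n$, and leaving the substitution into the bound of Theorem~\ref{thm: complexity} implicit. Your algebraic simplification (including the use of $a^{\log b} = b^{\log a}$ to rewrite $m^{h_p \log n}$ as $n^{h_p \log m}$) is exactly the intended derivation.
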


Therefore, the complexity of our algorithm is mainly decided by tree heights of primal graphs $h_p$.
In the worst case when tree primal graphs have height $O(n)$ -- for instance path graphs, whose tree height is $n$ when rooted at the start node --
then the worst-case complexity of our algorithm is $O(n^{n \log m})$ by Corollary~\ref{cor: complexity}. That is, the time complexity is worst-case super-exponential.

In cases when the tree primal graph has tree height of size $O(\log n)$,
the complexity of our algorithm is $O(n^{1 + (3 + \log m) \log n})$ which is of quasi-polynomial complexity, and considered to be efficient.
Trees with tree height in $O(\log n)$ are a general class of trees used in various models.
Balancing trees like AVL trees and full k-ary trees are of tree height $O(\log n)$.
Another example is a star graph, which has one internal node and all other nodes as leaves.
This graph corresponds to the well-known naive Bayes structure for directed graphical models.
It is the primal graph of a theory modeling independent variables predicting one and the same dependent (class) variable.
The tree height of star graphs is constant $1$ when choosing the internal node as root.
Hence, our algorithm runs efficiently on such WMI problems.

\section{EMPIRICAL EVALUATION}

We analyze the performance of our search-based MI algorithm on SMT($\lra$) theories with tree primal graphs.
First, we show that our algorithm is efficient for theories whose primal graphs have constant tree heights, or tree heights of log scale w.r.t.\ the number of real variables $n$.
For theories whose primal graph has tree height in $O(n)$ 
-- the cases where our algorithm has super-exponential worst-case complexity in theory --
empirical results show that our algorithm still runs efficiently.
We also consider a more complex house price model where house sizes are dependent, as opposed to those in Example~\ref{exa: indept house}. Moreover, the house price model has non-trivial weight functions that our algorithm first reduces to a MI problem as outlined in Section~\ref{sec: MI is all u need}.
We compare our algorithm to alternative WMI solvers and conclude that it significantly outperforms existing solvers on these benchmarks.

\paragraph{Benchmarks}
We compare our algorithm (SMI) with other WMI solvers.
The block-clause-strategy-based solver (BC) \citep{belle2015probabilistic} iteratively generates new models by adding the negation of the latest model to the formula for the following iteration.
The all-satisfying-assignments-based solver (ALLSMT) \citep{belle2016component} first generates the set of all $\lra$-satisfiable total truth assignments on atoms that propositionally satisfy the theory.
The implementation of \citet{braz2016probabilistic} (PRAiSE) is a variable-elimination-based solver.
The predicate-abstraction-based solver (PA) \citep{morettin2017efficient} exploits the power of SMT-based predicate abstraction to reduce the number of models to be integrated over.
Both the extended algebraic-decision-diagram-based solver (XADD) \citep{kolb2018efficient} and sentential-decision-diagram-based solver (Symbo) \citep{zuidberg2019exact} use circuit-based compilation languages and exploits the circuit structures.

\subsection{TREE PRIMAL GRAPHS}
We investigate the performance of our algorithm on SMT($\lra$) theories with three types of tree primal graphs:
1) star graphs, consisting of one center node connected to all other nodes, and no other connections;
2) full 3-ary trees, whose non-leaf vertices have exactly three children and all levels are full except for some rightmost position of the bottom level;
3) path graphs, consisting of linearly connected nodes.
These structural constraints arise naturally in data and many probabilistic graphical modeling problems.

For each graph type, given a number of nodes $n$, we introduce $n$ real variables $\vv{x} = \{x_0, x_1, \cdots, x_{n-1}\}$ with bounded domains $\forall i, (-1 \leq x_i \leq 1)$. 
Denote the graph by $G = (V, E)$ where $V = \{ 0, 1, \cdots, n - 1 \}$ is the vertex set and $E = \{ (i, j), i, j \in V \}$ the edge set.
We perform MI for the following theories and increasing $n$.
\begin{equation*}
  \theta(\vv{x}) =
  \left\{
  \begin{array}{l}
    \bigwedge_{i \in V}~ (-1 \leq x_i \leq 1) \\
    \bigwedge_{(i, j) \in E}~((x_i + 1 \leq x_j) \lor (x_j \leq x_i - 1 )) \\
  \end{array}
  \right.
\end{equation*}

 Figure~\ref{fig: mi experiment } shows example primal graphs and the execution time of experiments comparing SMI with baselines.
 
For MI over theories with all three types of tree primal graphs,
our algorithm significantly outperform other WMI solvers in terms of execution time.
The runtime curves of other solvers grow seemingly exponentially while our curve grows slowly with the number of real variables.
For theories with star graphs and full 3-ary trees as primal graphs,
the time curves of SMI are consistent with the complexity analysis in Section~\ref{sec: complexity} stating that our algorithm has quasi-polynomial complexity.
For theories with path graphs as primal graphs, which are still sparse graphs,
we perform caching and
the runtime curve grows slowly, even though our worst-case analysis allows for a super-exponential time complexity.

\begin{figure}[!t]
  \begin{subfigure}[t]{0.35\textwidth}
  \centering
    \includegraphics[height=1.3in]{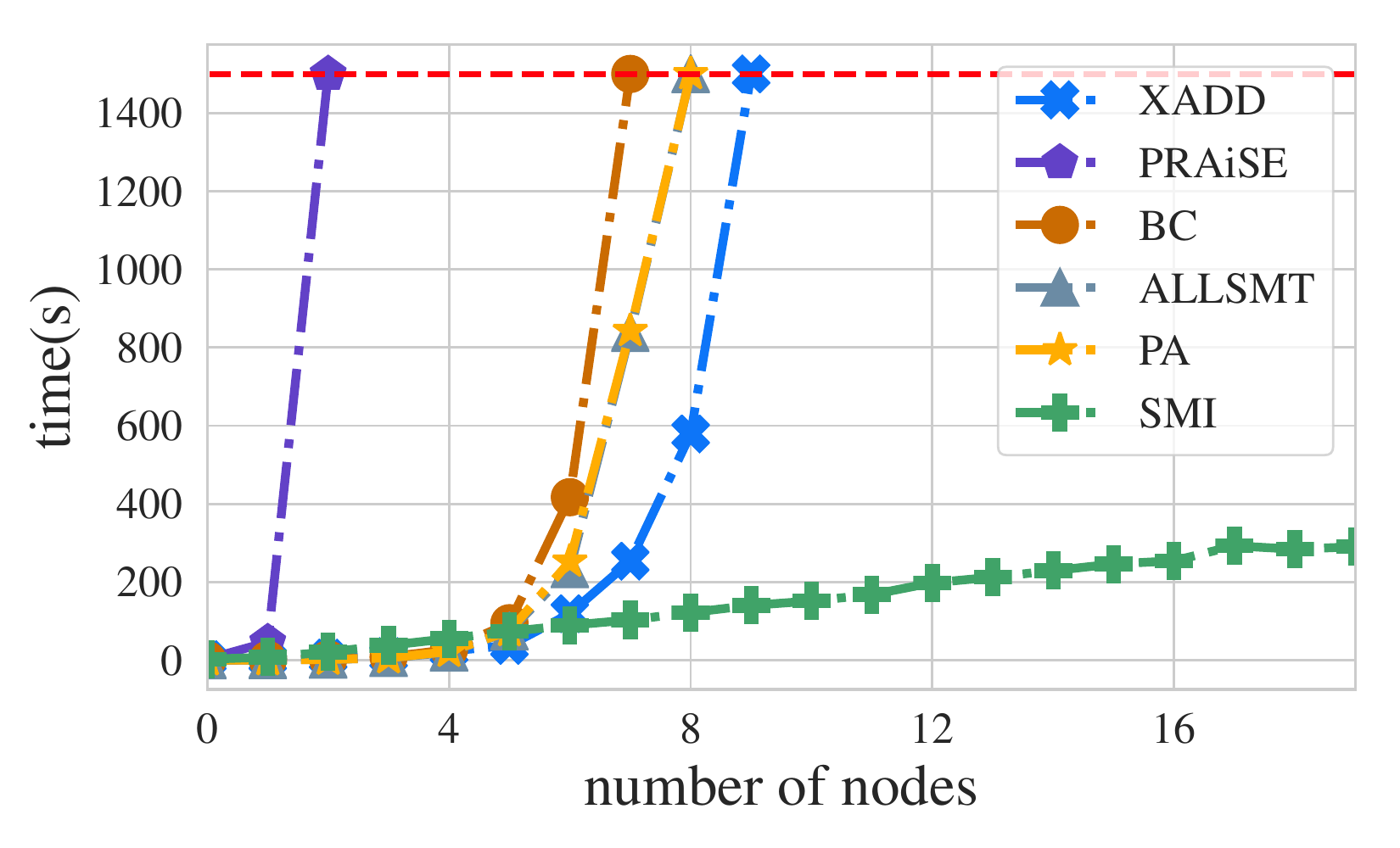}
  \end{subfigure}
  \begin{subfigure}[t]{0.1\textwidth}
    \includegraphics[height=1.3in]{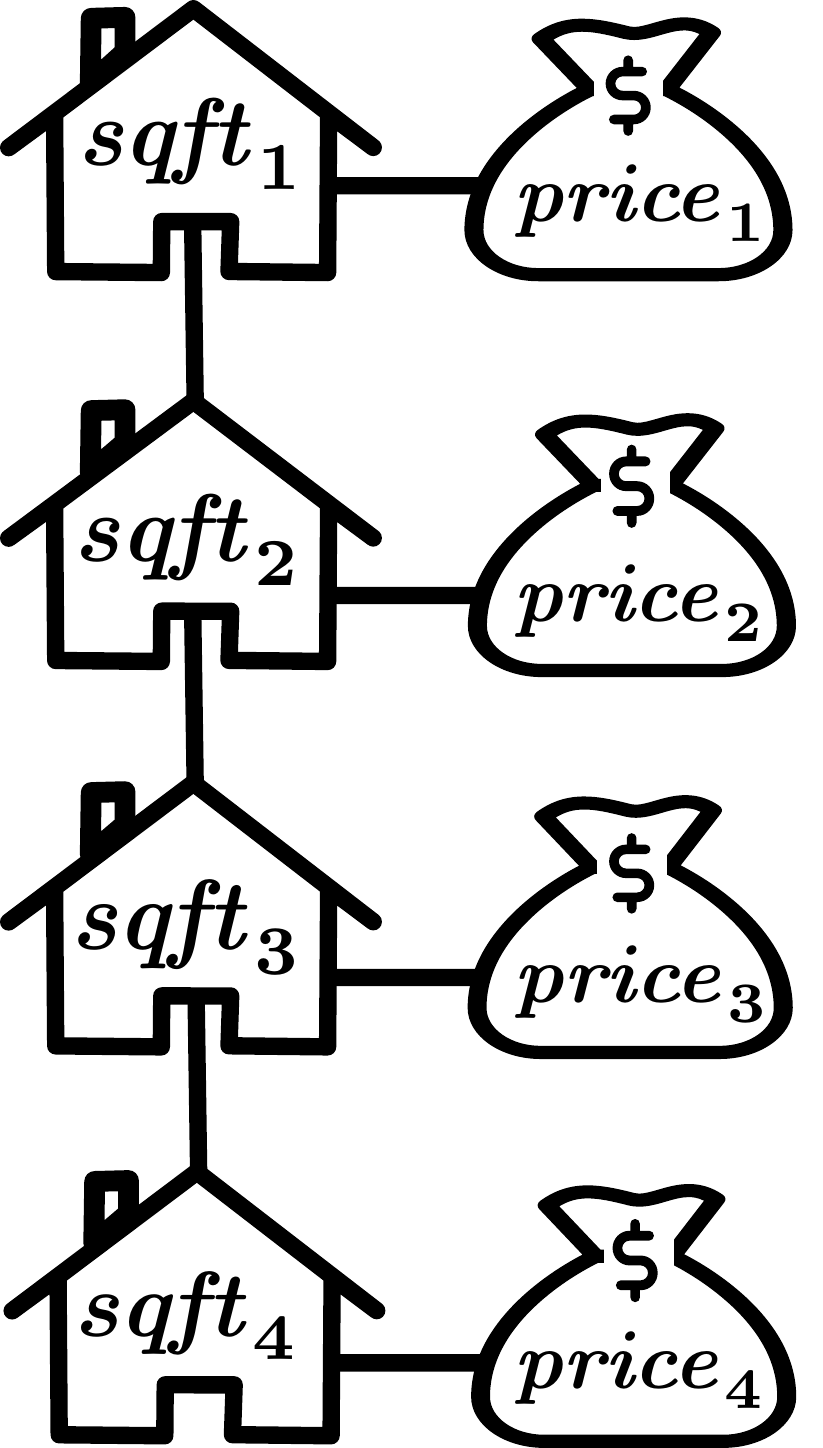}
  \end{subfigure}
  \caption{Runtime and primal graph for house price model.}
  \label{fig: house price experiment}
\end{figure}

\subsection{HOUSE PRICE SMT($\lra$) MODEL}
In Example \ref{exa: indept house} we performed MI for multiple houses based on extreme independence assumptions. Now we consider a more complicated case where houses are not independent and there are Boolean variables in the SMT($\lra$) model. Moreover, we choose non-trivial per-literal weight functions in order to evaluate our algorithm for reducing WMI to unweighted MI problems.

Specifically, we consider $n$ houses that are located along a street. Each house $i$ has its price and square footage model as in Example~\ref{exa: price}. Also, we enforce the constraint that square footage between two neighboring houses should not vary too much and we use a Boolean variable $b$ to indicate whether or not these houses are located in an urban area. 
This gives the following SMT theory.
\begin{equation*}
    \gamma_{\mathit{street}} = 
    \left\{
    \begin{array}{l}
        (b \lor \neg b) \land \bigwedge_{i=1}^n \gamma_i \\
        \bigwedge_{i=1}^{n-1} (\squarefeet_{i} \le \squarefeet_{i+1} + \offset)
    \end{array}
    \right.
\end{equation*}
with $\offset$ a constant characterizing maximum difference in square footage between two neighboring houses.
For weights $w$,
consider the set of literals $\mathL = \{b\} \cup \{0 \! < \! \price_i \! < \! 3000, i = 1, \cdots, n\}$ and per-literal weight functions $\mathP = \{p_{b}\} \cup \{p_{(0 < \price_i < 3000)}, i = 1, \cdots, n \}$, with $p_b(\vv{x}) = 1.5$ and $p_{(0 < \price_i < 3000)}(\vv{x}) = \price_i^2$ for all $i$.
Then, in worlds where all literals in $\mathL$ are satisfied, our weight function is $1.5 \prod_{i=1}^n \price_i^2$.
In worlds where $b$ is false but other literals are satisfied, the weight function is $\prod_{i=1}^n \price_i^2$.
Figure~\ref{fig: house price experiment} shows an example primal graph and WMI runtime for this house price model.

\section{RELATED WORK}
SMT \citep{barrett2010smt} has been one of the most prominent advances in automated reasoning
and many efficient SMT solvers have been built \citep{de2008z3,barrett2011cvc4,cimatti2013mathsat5,dutertre2014yices}.
The counting version of SMT, that is \#SMT, and in particular \#SMT($\la$) is a fundamental problem in quantitative program analysis
\citep{liu2011program,geldenhuys2012probabilistic,filieri2014reliability,phan2014quantifying,von2015symbolic,filieri2015quantification}.
The \#SMT($\la$) problem is \#P-hard, as is model counting  \citep{valiant1979complexity}.
Other first-order hybrid probability models have been proposed, usually based on sampling inference~\citep{ravkic}.

SGDPLL($T$) is an algorithm for solving probabilistic inference modulo theories while also generating simpler sub-problems \citep{braz2016probabilistic}.
It performs case analysis on SMT literals, whereas SMI instead operates on continuous theory variables.
Similar to our observation that WMI problems can be reduced to MI problems, \citet{chakraborty2015weighted} propose a method to reduce WMC to unweighted model counting.
Although the focus of this paper is on exact inference, there also exist notable approximate solutions to \#SMT($\la$) and WMI
\citep{ma2009volume,belle2015hashing,chakraborty2016approximate,chistikov2017approximate}.

\citet{morettin2017efficient}  enumerate integrable spaces by predicate abstraction and allow general weight functions. \citet{kolb2018efficient} use case functions as weights, which still permits compilation into XADD circuits. 
Weight functions in these two cases are not consistent with the factorization structure of the SMT sentence. The factorization structure is a crucial aspect of efficient inference, and its isolation to the logical part of WMC/WMI is considered to be an advantage, facilitating solver building. 
Our definition of factorized weight functions is similar to \citet{belle2015probabilistic} and \citet{zuidberg2019exact}. \citet{belle2016component} exploit independence in WMI problems that are exactly equivalent to WMC problems.

\section{Conclusions}
This paper proposed a search-based WMI algorithm that exploits structural independence properties to improve efficiency.
For WMI on SMT($\lra$) theories with tree primal graphs and piecewise polynomial weight functions,
our algorithm decomposes WMI problems during search.
A complexity analysis showed that 
for balanced tree primal graphs, our algorithm yields quasi-polynomial complexity.
Experimental comparisons confirmed a drastic efficiency improvement over baselines.

\guy{TODO: mention related work by Shenoy on continuous-variable junction trees? \citep{shenoy2011inference}}

\guy{TODO: mention generalized DPLL?}

\guy{Should we mention Braz et al. \citep{braz2016probabilistic} and how it compares? I think they focus on discrete theories of integers, so that allows them to decompose and use sparsity. They don't support LRA.}

\guy{In around Sept-Oct I sent an unpublished paper with good related work references on \#SMT from the verification community. We should make sure to include all of those as well.}

\guy{Kuldeep Meel had a paper \citep{chakraborty2015weighted} at IJCAI 2015-2016 or so about reducing Weighted Model Counting to Model Counting. We should cite it as related to our WMI to MI reduction.}

\zhe{weight functions we use is different from \citet{morettin2017efficient}, who do integration over total assignments over integratable space thus general function, and from \citet{kolb2018efficient} who use knowledge compilation thus case function.}

\zhe{Double check acknowledgement.}

\zhe{Double check UAI camera ready instruction.}

\paragraph{Acknowledgements}
The authors  would  like  to  thank  Brendan  Juba, Andrea Passerini, and Roberto Sebastiani for  valuable discussions.
This work is partially supported by NSF grants \#IIS-1657613, \#IIS-1633857, \#CCF-1837129, DARPA
XAI grant \#N66001-17-2-4032, NEC Research, and gifts from Intel and Facebook Research.

\bibliographystyle{plainnat}
\bibliography{uai19}

\clearpage

\appendix

\section{PROOFS}

\subsection{PROOF OF PROPOSITION~\ref{pro: b to nb wmi}}

\begin{proof}{\textbf{(Proof of Proposition \ref{pro: b to nb wmi})}}
  
  Consider the most basic case when there is only one Boolean variable $b$ in theory $\theta$.
  Let $\theta^\prime$ be an SMT($\lra$) theory defined as follow
  \begin{align*}
    \theta^\prime &= \theta\{ b: \lambda_b\} \land (-1 \le \lambda_b \le 1)
  \end{align*}
  where $\theta\{b: \lambda_b\}$ is obtained by replacing all atom $b$ by $0 < \lambda_b$ and replacing all its negation $\neg b$ by $\lambda_b < 0$ in theory $\theta$.
  
  Recall that weight functions are defined by a set of literals $\mathL$ and a set of per-literal weight functions $\mathP = \{ p_{\ell}(\vv{x}) \}_{\ell \in \mathL}$. When a literal $\ell$ is satisfied in a world, denoted by $\vv{x} \land \vv{b} \models \ell$, weights are defined as follows
    \begin{equation*}
        w(\vv{x}, \vv{b}) = \prod_{\substack{\ell \in \mathL \\ \vv{x} \land \vv{b} \models \ell}} p_{\ell}(\vv{x}). 
    \end{equation*}
  
  Let $\mathL^\prime$ be a set of literals obtained by replacing Boolean literal $b$ by $0 < \lambda_b$ and replacing its negation $\neg b$ by $\lambda_b < 0$ in theory $\theta$ as we do for theory.
  For the set of per-literal weight functions $\mathP^\prime$, we define it for introduced real variable $\lambda_b$ by $p_{(\lambda_b > 0)} = p_b$ and $p_{(\lambda_b < 0)} = p_{\neg b}$.
  
  Then we have that for any $\vv{x}^*$,
  \begin{align*}
      w^\prime(\vv{x}^*, \lambda_b) = 
      \left\{
      \begin{array}{lr}
        w(\vv{x}^*, b), & 1 > \lambda_b > 0 \\
        w(\vv{x}^*, \neg b), & -1 < \lambda_b < 0 \\
      \end{array}
      \right.
  \end{align*}

  By definition of WMI, we write $\WMI(\theta, w \mid \vv{x}, \vv{b})$ in its integration form as follows.
  \begin{align*}
    &\WMI(\theta, w \mid \vv{x}, \vv{b}) \\
    =& \int\limits_{\theta(\vv{x}, b)} w(\vv{x}, b) d \vv{x} + \int\limits_{\theta(\vv{x}, \neg b)} w(\vv{x}, \neg b) d \vv{x}
  \end{align*}
  For the first term in the above equation, we can rewrite it s.t. Boolean variable $b$ is replaced by real variable $\lambda_b$ in the following way.
  \begin{align*}
    \int\limits_{\theta(\vv{x}, b)} w(\vv{x}, b) d \vv{x}
  =& \int_0^1\int\limits_{\theta(\vv{x}, b)} w(\vv{x}, b) d \vv{x} d \lambda_b \\
  =& \int\limits_{\theta^\prime(\vv{x}, \lambda_b)} w^\prime(\vv{x}, \lambda_b) d \vv{x} d \lambda_b 
  \end{align*}
  By doing this to the other integration term of $\WMI(\theta, w \mid \vv{x}, \vv{b})$, and also by the definition of WMI, we finally obtain that
  \begin{align*}
      \WMI(\theta, w \mid \vv{x}, \vv{b}) = \WMI(\theta^\prime, w^\prime \mid \vv{x}^\prime)
  \end{align*}
  where $\vv{x}^\prime = \vv{x} \cup \{ \lambda_b\}$ is a set of real variables.
  The proof above can be easily adapted to multiple Boolean variable cases, which proves our proposition.
\end{proof}

\subsection{PROOF OF PROPOSITION~\ref{pro: wmi to mi}}
\label{proof: wmi to mi}

\begin{proof}{\textbf{(Proof of Proposition \ref{pro: wmi to mi})}}
  To start with, we consider SMT($\lra$) theory $\theta$ with no Boolean variables with a simple weight function $w$
  where the set of literal $\mathL = \{ \ell \}$ has only one literal and literal weight function $p_\ell(\vv{x}) = \prod_{i=0}^n x_i^{p_i}$.

  \begin{cla}\label{cla: mono and mi}
    For a monomial function $f(\vv{x}) = \prod_{i=0}^n x_i^{p_i}$,
    let $\theta_f = \bigwedge_{i = 0}^n \bigwedge_{j = 1}^{p_i} (0 \leq z_j^i \leq x_i)$.
    Then we have the monomial $f(\vv{x}) = \textit{MI}(\theta_f \mid \vv{z}; \vv{x})$,
    where $\vv{z}$ is the set of real variables $z_j^i$ in theory $\theta_f$, 
    and $\vv{x}$ is parameters of theory $\theta_f$.
  \end{cla}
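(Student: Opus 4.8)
The plan is to compute the model integral $\textit{MI}(\theta_f \mid \vv{z}; \vv{x})$ directly by unfolding the definition of model integration as an iterated integral over the $\vv{z}$ variables, treating the $\vv{x}$ as fixed parameters. Observe that the theory $\theta_f = \bigwedge_{i=0}^n \bigwedge_{j=1}^{p_i} (0 \le z_j^i \le x_i)$ is a conjunction of box constraints, one for each auxiliary variable $z_j^i$, and crucially each constraint involves only a single $\vv{z}$-variable (with $x_i$ as a parameter). Hence the feasible region factorizes as a product of intervals, and the integral of the constant $1$ over it factorizes accordingly.

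First I would write
\begin{align*}
  \textit{MI}(\theta_f \mid \vv{z}; \vv{x})
  = \int_{\theta_f(\vv{z}; \vv{x})} 1 \, d\vv{z}
  = \prod_{i=0}^n \prod_{j=1}^{p_i} \int_{\{z_j^i \,:\, 0 \le z_j^i \le x_i\}} 1 \, d z_j^i .
\end{align*}
Then each inner integral evaluates to $\int_0^{x_i} 1 \, d z_j^i = x_i$ (here one uses that $x_i \ge 0$, which is implicitly guaranteed since the literals $(0 \le z_j^i \le x_i)$ appearing in the theory, together with the domain constraints on $x_i$ in the original theory $\theta$, ensure the relevant range of $x_i$ is nonnegative; if $x_i < 0$ the feasible set is empty and the integral is $0$, consistent with the monomial vanishing is not quite right, so I would note that the claim is stated for the nonnegative orthant or that $p_i$ even/the domain is restricted — I would simply assume $x_i \ge 0$ as is standard for these box encodings). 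Collecting terms,
\begin{align*}
  \textit{MI}(\theta_f \mid \vv{z}; \vv{x})
  = \prod_{i=0}^n \prod_{j=1}^{p_i} x_i
  = \prod_{i=0}^n x_i^{p_i}
  = f(\vv{x}),
\end{align*}
which is the desired identity.

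The only genuine subtlety — and the step I would flag as the main obstacle — is justifying the factorization of the multidimensional integral into the product of one-dimensional integrals. This relies on the fact that the constraint set is a Cartesian product of intervals because no constraint couples two distinct $\vv{z}$-variables; this is exactly the structural reason why the primal graph of $\theta_f$ restricted to the $\vv{z}$-variables has no edges among them (each $z_j^i$ connects only to the parameter $x_i$), which is also what will later be used to argue the treewidth claim in Proposition~\ref{pro: wmi to mi}. I would state this factorization explicitly via Fubini's theorem, remarking that the integrand is the constant $1$ and the domain is a bounded box, so all hypotheses are trivially met. After establishing the claim, the remainder of the proof of Proposition~\ref{pro: wmi to mi} would extend it to a product of monomials (one per literal in $\mathL$) by introducing a disjoint block of auxiliary variables per literal and guarding each block with the literal via implication clauses, so that the block contributes its monomial when the literal holds and contributes $1$ (a single unit-interval variable) otherwise — but that is beyond the statement of the claim itself.
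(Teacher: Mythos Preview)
Your proposal is correct and follows essentially the same computation as the paper: unfold the definition of $\MI$, factor the integral over the box $\theta_f$ into a product of one-dimensional integrals $\int_0^{x_i} 1\, dz_j^i = x_i$, and collect the $p_i$ copies to obtain $\prod_i x_i^{p_i}$. The paper's own proof is even terser---it does not comment on Fubini or the sign of $x_i$---so your additional remarks are extra care rather than a different argument.
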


  Let $\theta^\prime = \theta \land (\ell \Rightarrow \theta_p) \land (\neg \ell \Rightarrow \hat{\theta}_p)$ where $p = p_{\ell}$ for brevity, $\theta_p$ is as defined in Claim \ref{cla: mono and mi} and $\hat{\theta}_p := \bigwedge_{i = 0}^n \bigwedge_{j = 1}^{p_i} (0 \leq z_j^i \leq 1)$.
  Then we can rewrite $\WMI(\theta, w \mid \vv{x})$ as MI problem by Claim \ref{cla: mono and mi} as follows.
  \begin{align*}
    &\WMI(\theta, w \mid \vv{x})
    = \int_{\theta(\vv{x})} w(\vv{x}) d \vv{x} \\
    & = \int_{\theta(\vv{x}) \land \ell(\vv{x})} p(\vv{x}) d \vv{x} +  \int_{\theta(\vv{x}) \land \neg \ell(\vv{x})} 1 d \vv{x}\\
    & = \int_{\theta(\vv{x}) \land \ell(\vv{x})} MI(\theta_p \mid \vv{z} ; \vv{x}) d \vv{x} +  \int_{\theta(\vv{x}) \land \neg \ell(\vv{x})} 1 d \vv{x}\\
    & = \int_{\theta(\vv{x}) \land \ell(\vv{x})} \int_{\theta_p(\vv{z})} 1 d \vv{z} d \vv{x} +  \int_{\theta(\vv{x}) \land \neg \ell(\vv{x}) \land \hat{\theta}_p} 1 d \vv{x} d \vv{z}\\
    & = \MI(\theta \land (\ell \Rightarrow \theta_p) \land (\neg \ell \Rightarrow \hat{\theta}_p) \mid \vv{x}, \vv{z})
  \end{align*}
  Take $\vv{x}^\prime = \vv{x} \cup \vv{z}$ then the proposition holds.
  The proof can be easily adapted for monomials with non-trivial coefficient by inducing more real variables $z$.
  It also holds for more general weight functions with literal set $\mathL = \{ \ell_i \}_{i = 1}^k$ and set of monomial per-literal weight functions $\mathP = \{ p_{\ell_i} \}_{i = 1}^k$,
  by taking theory $\theta^\prime$ as follows which completes the proof of proposition.
  \begin{align*}
    \theta^\prime = \theta \land \bigwedge_{i = 1}^k (\ell_i \Rightarrow \theta_{p_{\ell_i}}) \land \bigwedge_{i = 1}^k (\neg \ell_i \Rightarrow \hat{\theta}_{p_{\ell_i}}).
  \end{align*}

\end{proof}

\begin{proof}{\textbf{(Proof of Claim \ref{cla: mono and mi})}}
  By definition of theory $\theta_f$,
  \begin{align*}
    MI(\theta_f & \mid \vv{z}; \vv{x}) 
    = \int_{\theta_f (\vv{z})} 1 d \vv{z} \\
    & = \prod_{i = 1}^n \prod_{j = 1}^{p_i} \int_0^{x_i} 1 d z_j^i \\
    & = \prod_{i = 1}^n \prod_{j = 1}^{p_i} x_i = \prod_{i = 1}^n x_i^{p_i} = f(\vv{x}).
  \end{align*}
\end{proof}

\subsection{REDUCTION TO MI WITH POLYNOMIAL WEIGHTS}
\label{sec: MI with poly weight}

The reduction from WMI problems to MI problems in Proposition~\ref{pro: wmi to mi} can also be done for arbitrary polynomial weight functions but can increase treewidth of primal graphs. We give a formal description on this reduction as follows.

Let $\theta$ be an SMT($\lra$) theory with no Boolean variables with weight functions
where the set of literal $\mathL = \{ \ell \}$ has only one literal and literal weight function is a polynomial, denoted by $p(\vv{x}) = \sum_{i=1}^k \alpha_i f_i(\vv{x})$ with each $f_i$ a monomial function.

It has been shown in the proof of Proposition~\ref{pro: wmi to mi} in Section~\ref{proof: wmi to mi} that for each monomial function $f_i$, there exist two SMT($\lra$) theories $\theta_i$ and $\hat{\theta}_i$ such that $\mi(\theta_i \mid \vv{z}_i; \vv{x}) = f_i(\vv{x})$ and $\mi(\hat{\theta}_i \mid \vv{z}_i; \vv{x}) = 1$.

Let's define theories $\theta_i^\prime = \theta_i \land (0 < v_i < \alpha_i)$ and $\hat{\theta}_i^\prime = \hat{\theta}_i \land (0 < v_i < 1)$ with parameter variables $v_i$. Also define an indicator variable $\lambda$ with real domain $[0, k]$ and literals $\ell_i = i - 1 < \lambda < i$ with $i \in \{1, 2, \cdots, k\}$. 
Then we have that for an SMT($\lra$) theory $\theta^\prime$ defined as follows, it holds that $\WMI(\theta, w \mid \vv{x}) = \mi(\theta^\prime \mid \vv{x}, \vv{z})$ with $\vv{z}$ denoting all auxiliary variables.
\begin{align*}
    \theta^\prime = \theta \land (\ell \iff \lor_{i=1}^k \ell_i) \bigwedge_{i=1}^k (\ell_i \Rightarrow \theta_i^\prime) \bigwedge_{i=1}^k (\neg \ell_i \Rightarrow \hat{\theta}^\prime_i)
\end{align*}
Why the WMI problem and the MI problem are equal can be proved by the following observations.
\begin{align}
\WMI(&\theta, w \mid \vv{x})
= \int_{\theta(\vv{x})} w(\vv{x}) d \vv{x} \\
&= \int_{\theta(\vv{x}) \land \ell(\vv{x})} p(\vv{x}) d \vv{x} +  \int_{\theta(\vv{x}) \land \neg \ell(\vv{x})} 1 d \vv{x}
\label{eq: reduction wmi}
\end{align}

For the first term in Equation~\ref{eq: reduction wmi}, we have that 
\begin{align*}
    \int_{\theta(\vv{x}) \land \ell(\vv{x})} & p(\vv{x}) d \vv{x} = \sum_{i=1}^k \int_{\theta(\vv{x}) \land \ell(\vv{x})} \alpha_i f_i(\vv{x}) d \vv{x} \\
    &= \sum_{i=1}^k \int_{\theta(\vv{x}) \land \ell(\vv{x}) \land \ell_i} \alpha_i f_i(\vv{x}) d \vv{x} d \lambda \\
    &= \sum_{i=1}^k \int_{\theta(\vv{x}) \land \ell(\vv{x}) \land \ell_i \land \theta_i} 1  d \vv{x} d \vv{z} \\
    &= \mi(\theta^\prime \land \ell \mid \vv{x}, \vv{z})
\end{align*}
Also for the second term in Equation~\ref{eq: reduction wmi}, 
it equates to $\mi(\theta^\prime \land \neg \ell \mid \vv{x}, \vv{z})$. Therefore, reduction from the WMI problem to the MI problem holds. Although the reduction process we show here is for theories with one polynomial weight function, this process can be generalized to theories with multiple polynomial weight functions with little modification.

\subsection{PROOF OF PROPOSITION~\ref{pro: piece poly}}
\begin{proof}{\textbf{(Proof of Proposition \ref{pro: piece poly})}}
  It follows from definition of WMI.
  Denote the set of real variables $\vv{x} \backslash \{ y \}$ by $\vv{\hat{x}}$.
  From the definition of WMI in Equation~\ref{def: wmi},
  we can obtain the following partial derivative of WMI of theory $\theta$ w.r.t. variable $y$.
  \begin{align*}
    & \frac{\partial}{\partial x} \WMI(\theta, w \mid \vv{x}, b) \mid_{y = y^*} \\
    &= \sum_{\mu \in \B^m} 
    \int\limits_{\theta(y^*, \vv{\hat{x}}, \mu)} 
    w(y^*, \vv{\hat{x}}, \mu) d \vv{\hat{x}} 
  \end{align*}
  where the variable $y$ is fixed to value $y^*$ in weight function, $\mu$ are total truth assignments to Boolean variables as defined before.
  The weight function is integrated over set 
  $\{ \vv{\hat{x}}^* \mid \theta(y^*, \vv{\hat{x}}^*, \mu ) \text{ is \true} \}$.
  We define $p(y)$ as follows 
  \begin{align*}
    p(y) := \sum_{\mu \in \B^m} 
    \int\limits_{y, \theta(\vv{\hat{x}}, \mu)} 
    w(y, \vv{\hat{x}}, \mu) d \vv{\hat{x}}  
  \end{align*}
  Since weight functions $w$ are piecewise polynomial,
  function $p(y)$ is a univariate piecewise polynomial $p(y)$, and $\WMI(\theta, w \mid \vv{x}, b)$ is an integration over $p(y)$, which finishes our proof.
\end{proof}

\begin{algorithm*}[t]  
  \caption{Polynomial pieces and degree enumeration algorithms}
  \vspace{-0.15in}
  \begin{multicols}{2}      
  {\textbf{a) $\PEEDGE$} -- For Two Variable Theory} \\
  \textbf{Input}:    
  $\theta$: SMT($\lra$) theory with two real variables \\
  $I$ : interval and degree tuples of variable $x$\\
  \textbf{Output}:  
  $I_y$: pieces and degrees for variable $y$
  \begin{algorithmic}[1]
  \State $B \leftarrow$ collect integration bounds on variable $x$
  \State $Y \leftarrow$ $y$ values where two bounds in $B$ meet
  \ForAll{interval $[l, u]$ resulting from $Y$}
  \State $\theta^\prime \leftarrow \theta \land (l \leq y \leq u)$
  \If{ $\theta^\prime$ is SAT}
  \State $\{l(y),u(y),d\} \leftarrow$ get\_bound\_degree($x$,$\theta^\prime$,$I$) 
  \State $d^\prime \leftarrow \text{argmax}_d ~~ \text{get\_degree}(l(y), u(y), d)\})$
  \State $I_y \leftarrow I_y \cup ([l, u], d^\prime)$
  \EndIf
  \EndFor
  \State \textbf{Return} $I_y$
  \end{algorithmic}  
  \columnbreak
  {\textbf{b) $\PENODE$} -- For Tree Primal Graph} \\
  \textbf{Input}:
    $\theta$: SMT theory with tree primal graph\\
    $G$: primal graph for theory $\theta$ \\
    \textbf{Output}:  
    $I_y$: interval and degree tuples of root variable $y$
    \begin{algorithmic}[1]
    \If{root $y$ has no child}
    \State $I_y \leftarrow$ get\_bound\_degree($\theta$) \\
    \Return{$I_y$}
    \EndIf
    \State $\theta_{y,c}$'s, $\theta_{G_c}$'s $\leftarrow$ partition SMT($\lra$) theory $\theta$
    \ForAll{child $c$}    
    \State $I_c$ $\leftarrow$ $\PENODE(\theta_c, G_c)$
    \State $I_y^c \leftarrow \PEEDGE(\theta_{y,c}, I_c)$
    \EndFor        
    \State \textbf{Return} $I_y = \shatter\!\left(\{I_y^c \}_c\right)$
    \end{algorithmic}    
  \end{multicols}
  \label{alg: two pe algorithms}
\end{algorithm*}

\subsection{PROOF OF THEOREM~\ref{thm: complexity}}

\begin{cla}\label{cla: poly pieces and degree}
  For each path in the primal graph that starts with the root and ends with a leaf,
  and each real variable in path with height $i$, 
  its number of polynomial pieces is $O(n \cdot c^{i + 1})$.
\end{cla}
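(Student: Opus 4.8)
The plan is to prove the bound by induction on the height $i$ of the variable along a fixed root-to-leaf path, directly tracking how the piece enumeration procedure $\PENODE$ (together with its subroutine $\PEEDGE$) grows the number of polynomial pieces as we move upward from the leaf toward the root. The induction hypothesis will be that a variable at height $i$ has $O(n \cdot c^{i+1})$ polynomial pieces, where $c$ is a constant that I expect to be linear in $m$, the number of $\lra$ literals (each literal, when rewritten to isolate a variable, contributes at most a bounded number of candidate endpoints/critical points). Concretely, I would first set up notation tying the height-$i$ variable $v_i$ to its child $v_{i-1}$ on the path, the two-variable sub-theory $\theta_{v_i, v_{i-1}}$, and the recursively computed piece set $I_{v_{i-1}}$.

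\emph{Base case.} At height $0$ (a leaf), the variable has no children, so $\PENODE$ returns $\texttt{get\_bound\_degree}(\theta)$ directly: the pieces come only from the literals mentioning that leaf variable, of which there are $O(m)$, hence $O(m) = O(n \cdot c^{1})$ pieces (absorbing constants into $c$), establishing the base case.

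\emph{Inductive step.} Assume the height-$(i-1)$ variable on the path has $O(n \cdot c^{i})$ pieces. When $\PEEDGE(\theta_{v_i,v_{i-1}}, I_{v_{i-1}})$ is run, the new critical points for $v_i$ arise from two sources: (a) the $O(m)$ integration bounds induced by literals over the edge $(v_i, v_{i-1})$, and (b) the endpoints of the $O(n\cdot c^{i})$ pieces of $v_{i-1}$ pushed through the (linear) bound expressions — each child piece endpoint maps to at most a constant number of $v_i$-values where a bound meets another bound. The set $Y$ of $y$-values where two bounds meet therefore has size $O(m + n\cdot c^{i})$, which is $O(n \cdot c^{i})$ once $c$ dominates $m/n$-type terms; the number of resulting intervals for $v_i$ is one more than this. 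Finally the $\shatter$ operation intersects the piece sets coming from all children of $v_i$ on the tree, but along a single path only one child is the path-successor, and the other children contribute piece sets of comparable size, so the intersection has size $O(n\cdot c^{i+1})$ after accounting for the multiplicative blowup of shattering against the $O(c)$-sized contributions of the off-path subtrees — giving $O(n \cdot c^{i+1})$, as claimed.

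\emph{Main obstacle.} The delicate point is bounding the effect of $\shatter$: naively, intersecting $k$ sorted partitions of an interval can yield a number of pieces equal to the \emph{sum} of their sizes, not the product, which would be too generous to explain the $c^{i+1}$ (rather than $c \cdot i$) growth; but if instead one must argue the worst case across a whole subtree the bound must be stated carefully so that the factor $c$ per level is genuinely multiplicative down the path and merely additive across siblings. I would therefore be careful to (i) pin down precisely that $\PEEDGE$ turns a child partition of size $s$ plus $O(m)$ edge-bounds into a $v_i$-partition of size $O(c\cdot s)$ — the factor $c$ being where the per-level blowup enters — and (ii) confirm that $\shatter$ across the (bounded-degree) children does not introduce an extra exponential-in-branching factor, only a polynomial one absorbed into the leading $n$. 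Getting these two constants to line up with the stated $O(n\cdot c^{i+1})$ is the crux; the rest is the routine induction sketched above.
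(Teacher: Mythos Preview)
Your induction on height matches the paper's approach, but you misplace where the per-level multiplicative factor $c$ enters (in the paper $c$ is simply $m$, the number of $\lra$ literals). In your inductive step you claim $\PEEDGE$ pushes a child partition of size $s$ to a parent partition of size only $O(m + s)$, and then try to recover the missing factor $c$ from $\shatter$ over off-path children. This is backwards. Inside $\PEEDGE$ the bounds on the child variable $v_{i-1}$ come in two kinds: the $O(c)$ \emph{symbolic} bounds $l(y),u(y)$ contributed by the edge literals in $\theta_{v_i,v_{i-1}}$, and the $O(n\cdot c^{i})$ \emph{numerical} bounds given by the endpoints of the child's pieces. A critical point of $y=v_i$ arises wherever two such bounds meet; in particular every symbolic bound can meet every numerical bound, so the number of critical points is $O(c\cdot n\cdot c^{i}) = O(n\cdot c^{i+1})$, not $O(m + n\cdot c^{i})$. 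Your sentence ``each child piece endpoint maps to at most a constant number of $v_i$-values'' is precisely the error --- each such endpoint maps to up to $c$ values, one per symbolic bound.

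The paper's proof makes exactly this count (``there are at most $c$ bounds containing $x_{i+1}$ and the rest bounds are numerical ones, [so] there are at most $O(n\cdot c^{i+2})$ solutions'') and never invokes $\shatter$ at all: the claim is stated per root-to-leaf path, and the induction tracks the piece count along that single path only. As you yourself observe in the obstacle paragraph, $\shatter$ is additive across siblings and hence could never supply a genuine per-level multiplicative factor; fortunately it does not need to, because $\PEEDGE$ already does.
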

\begin{proof}
  The proof can be done by mathematical induction.
  Denote the real variable with height $i$ in the path by $x_i$
  For $i = 0$, since the number of $\lra$ literals is $c$,
  then there are at most $c$ critical points for real variable $x_0$
  and therefore there are at most $c+1$ polynomial pieces for $x_0$.
  
  Suppose that the claim holds for $i$,
  that is, the number of polynomial pieces for $x_i$ is $O(n \cdot c^{i+1})$.
  To obtain critical points for variable $x_{i+1}$,
  we collect integration bounds on variable $x_i$ whose size is $O(n \cdot c^{i+1})$ by assumption.
  Since the critical points of variable $x_{i+1}$ are obtained by solving $b_1 = b_2$ w.r.t. variable $x_{i+1}$ for $b_1, b_2$ in bounds on variable $x_i$,
  where there are at most $c$ bounds containing $x_{i+1}$ and the rest bounds are numerical ones,
  there are at most $O(n \cdot c^{i + 2})$ solutions.
  Therefore, the number of polynomial pieces for $x_{i+1}$ is $O(n \cdot c^{i+2})$,
  which finishes our proof.
\end{proof}

\begin{proof}{\textbf{ (Proof of Theorem \ref{thm: complexity})}}
  Let $p$ be an arbitrary path in the pseudo tree $T$ that starts with the root and ends with a leaf.
  Denote the maximum polynomial degree in weight functions by $d$.
  By Claim \ref{cla: poly pieces and degree}
  for each variable,
  it has at most $O(n \cdot c^{h_p})$ polynomial pieces.
  Moreover from Prop. \ref{pro: piece poly}, 
  polynomials defined over each pieces have 
  at most $n(d+h_p)$ polynomial degree.
  Therefore the set of values chosen to do instantiation on a certain real variable has size $O(n^3 \cdot c^{h_p})$
  and each path $p$ induces a search space with size $O((n^3 \cdot c^{h_p})^{h_t})$ since length of each path is bounded by $h_t$.  
  
  The pseudo tree $T$ is covered by $l$ such directed paths.
  The union of their individual search spaces covers the whole search space,
  where every distinct full path in the search space appears exactly once.
  Therefore, the size of the search space is bounded by $O(l \cdot (n^3 \cdot c^{h_p})^{h_t})$.
\end{proof}

\section{CACHING}
\label{s:caching}
Our algorithm allows caching in two sense. The first is the caching of pieces, i.e. intervals and polynomial degrees obtained from child nodes, which can be considered as constraints from child nodes.
The pieces of a certain nodes is decided both by instantiation values from its father node as well as pieces from child nodes.
Although we instantiate root nodes with distinct values, the constraints from child nodes for a certain node remains unchanged as long as they have the same father-child relation in subtree.

Another case where caching is possible is values of $p(y)$ as defined in Prop. \ref{pro: piece poly} at instantiations of variable $x$.
This is possible because for a certain node, its pieces resulting from different instantiation values of its grandfather node might intersects.
This is especially helpful when there is a long path in primal graphs and caching can save a lot computational effort.

\section{PIECE ENUMERATION ALGORITHM}
\label{sec: appendix pe}
We summarize piece enumeration algorithms for two variable theory and for theory with tree primal graphs as described in Section~\ref{sec: Piece Enum} in Algorithm~\ref{alg: two pe algorithms}. Both get\_bound\_degree and get\_degree are trivial operations for specifying integration bounds and polynomial degree. They are applied when the magnitude order of integration bounds are fixed and thus they can be done by scanning through related theories.

\end{document}